\newtheorem{definition}{Definition}
\newtheorem{lemma}{Lemma}
\newtheorem{proof}{Proof}
\newtheorem{theorem}{Theorem}
\newtheorem{corollary}{Corollary}
\journal{--}
\begin{document}

\begin{frontmatter}

\title{The Partner Units Configuration Problem: \\ Completing the Picture}
\author{Erich Christian Teppan\footnote{Corresponding author. contact: erich.teppan@aau.at, +43 463 2700 3756} and Gerhard Friedrich}
\address{Universitaet Klagenfurt, Austria}

\begin{abstract}

The partner units problem (PUP) is an acknowledged hard benchmark problem for the Logic Programming community with various industrial application fields like surveillance, electrical engineering, computer networks or railway safety systems. However, computational complexity remained widely unclear so far. In this paper we provide all missing complexity results making the PUP better exploitable for benchmark testing. Furthermore, we present QuickPup, a heuristic search algorithm for PUP instances which outperforms all state-of-the-art solving approaches and which is already in use in real world industrial configuration environments.
\end{abstract}

\begin{keyword}
Partner units problem \sep heuristic search \sep automated configuration \sep computational complexity analysis 
\end{keyword}

\end{frontmatter}

\section{Introduction}

The partner units problem (PUP, \cite{falkner:modeling}) is a classical configuration problem where components have to be connected such that all user requirements and technical constraints are satisfied (see~\cite{mittal}). Solving such real world configuration problems are one of the major success stories of Artificial Intelligence which resulted in a commercially attractive area where many companies are offering configuration tools and services. Current modern configuration tools apply declarative knowledge representation and reasoning techniques based on constraint satisfaction, SAT solving, Answer Set Programming or Description Logics (see~\cite{junker} for an overview). Consequently, there is a strong interest that these techniques are applicable for typical configuration problems. 

Given the results of the ASP competition 2011\footnote{Summarized results and raw data set available at https://www.mat.unical.it/aspcomp2011/Model\%26SolveTrackFinalResults} (\cite{aspcomp}) and the evaluation documented in~\cite{aschinger:opt} the PUP is an exceptionally hard real world problem. Although general problem solving techniques based on constraint programming, SAT solving or Answer Set Programming are applicable for small and some mid-sized PUP instances, large real world cases are clearly out of reach for current solver technologies.

Figure \ref{fig:aspcomp} shows the 34 benchmark problems in the model-and-solve track of the ASP competition 2011 and depicts how often corresponding instances could be consistently solved or proved to be unsatisfiable within time and memory limits\footnote{Detailed information is available at \textit{https://www.mat.unical.it/aspcomp2011/Participants}}. Obviously, the hardest problem was the strategic company problem (\cite{cadoli:1997}, \cite{leone:2006}) which is complete on the second level of the polynomial hierarchy, i.e.\ $\Sigma^P_2 = NP^{NP}$. 

As it turns out the second hardest problem, and thus the hardest problem in NP out of all 2011 ASP competition model-and-solve benchmark problems, was a subclass of the PUP which had already been proven to be of polynomial complexity (\cite{aschinger:tackling}). Also a second PUP benchmark for which no complexity results were available so far, belonged to the hardest third of tested benchmark problems. Consequently, the PUP is both from the practical and theoretical point of view a very important problem for further investigations.

\begin{figure}
\center
	\includegraphics[width=14cm]{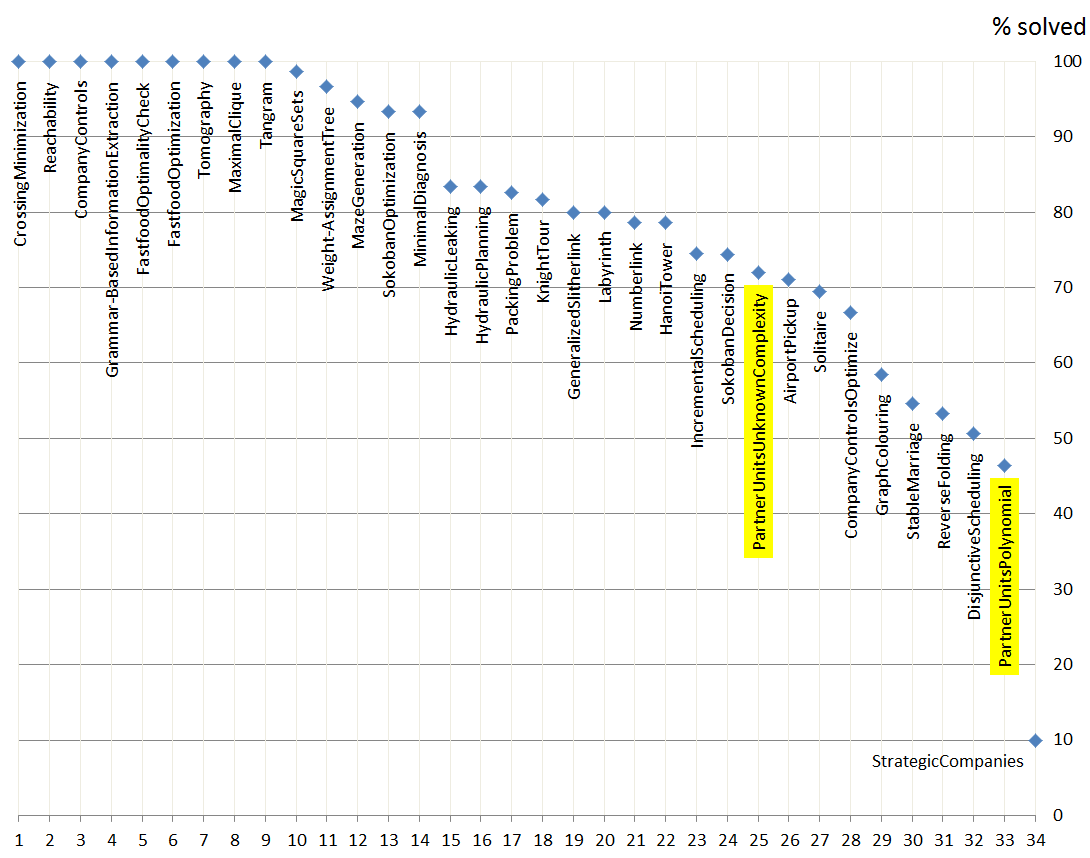}
	\caption{Percentage of ASP competition runs where a solution could be found by a solver or unsatisfiability could be proved}
	\label{fig:aspcomp}
\end{figure}

Originally, the partner units problem was identified in the domain of railway safety systems. One of the problems in this domain is to make sure that certain rail tracks are not occupied by a train/wagon before another train enters this track.
For deciding if a rail track is occupied, occupancy indicators and wheel sensors for counting the number of train wheels passing a wheel sensor are connected to processing units. Because of fail-safety and realtime requirements the number of sensors and indicators which can be connected to the same unit is limited (called unit capacity, UCAP). Also one sensor/indicator can only be directly connected to one unit. Moreover, a unit can only be connected to a limited number of other units (called inter unit capacity, IUCAP). These units are called the partner units of the unit. Elements can only communicate with elements connected to the same unit and with elements connected to one of the partner units. Given an input graph specifying which sensor data is needed in order to calculate the correct signal of an occupancy indicator, the problem consists in connecting sensors/indicators with units and units with other units such that all communication requirements are fulfilled.

\begin{figure}
\center
	\includegraphics[width=13cm]{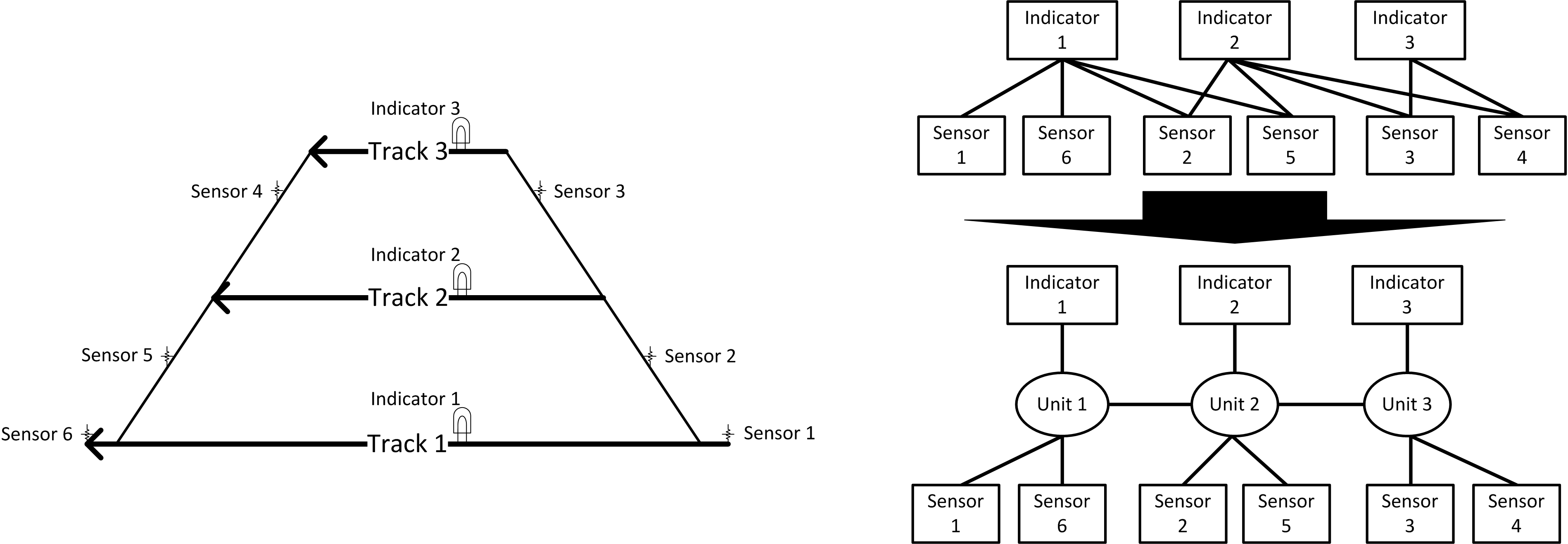}
	\caption{Example of a railway track layout}
	\label{fig:example1}
\end{figure}

Figure \ref{fig:example1} shows a simple example with three connected railway tracks with corresponding occupancy indicators and wheel sensors, the corresponding PUP input graph and a solution wherby UCAP = 2, i.e.\ only two sensors and two indicators can be connected to the same unit, and IUCAP = 2, i.e.\ each unit can at most have two partner units. In order to calculate the correct signal for Indicator 3 only data from Sensor 3 and Sensor 4 is needed. If the number of outgoing wheels counted by Sensor 4 is equal to the incoming wheel counts of Sensor 3 then Track 3 is empty. For Track 2 and Track 3 it is somewhat more complex. In order to calculate the correct signal for Indicator 2 it is not sufficient to only incorporate data from Sensor 2 and Sensor 5 as it is not clear whether a wheel has headed to or is coming from Track 3. Therefore, additional data from Sensor 3 and Sensor 4 is needed. For Indicator 1 data from Sensor 1, Sensor 2, Sensor 5 and Sensor 6 must be considered.

Further important application domains apart from railway safety are electrical engineering, peer-to-peer networking and CCTV surveillance (\cite{teppan:reconf}, \cite{aschinger:opt}).




At tis point, there can be drawn three important conclusions:

\begin{enumerate}
	\item The PUP is a very hard real world problem which is perfectly suitable for benchmark testing in the declarative programming community.
	\item In order to further and better exploit the PUP as a benchmark problem, complexity results for all problem subclasses are needed.
	\item Because of the practical relevance of the PUP for many industrial fields and also as a benchmark problem, it is further desirable to have a competitive solving strategy in order to:

\begin{itemize}
	\item provide solutions for large real world cases.
	\item provide a yardstick for benchmark testing of general problem solving techniques. 
\end{itemize}
\end{enumerate}

Our main contributions are the following: 

\begin{enumerate}
	\item We deliver complexity results for all PUP sub problems and show that except all PUP subclasses which were not proved polynomial so far are NP-complete.
	\item We present a novel heuristic backtracking algorithm which performs better than state-of-the-art approaches by orders of magnitude and solves real world PUP instances in milliseconds. 
\end{enumerate}

The paper is structured as follows. In Section 2 we provide the complexity results for all possible subclasses. QuickPup, our heuristic backtracking search algorithm, is introduced in Section 3. In Section 4 we provide experimental results which are based on real-world problem instances.

\section{Complexity of the PUP}

With regard to the origin of the PUP we refer to
the two types of elements to be placed on communication
units as \textit{indicators} and \textit{sensors} and to the communication units
as \textit{units} for the rest of the paper. Formally, the PUP can be defined as follows:

\begin{definition}
A partner units problem instance is a sixtuple $P = <I,S,E,U,\\UCAP,IUCAP>$ where $I$ represents a set of indicators, $S$ represents a set of sensors, $U$ is a set of units and $E \subseteq I \times S$ is the set of edges between $I$ and $S$ in the corresponding bipartite input graph. Given the two natural numbers $UCAP$ (unit capacity) and $IUCAP$ (inter unit capacity), the PUP decision problem consists in deciding whether there exists a solution function $f: I \cup S \rightarrow U$ such that:

\begin{itemize}
\item For every $u \in U$:\\
$I_{u} = \{i | i \in I \wedge f(i)=u\}, |I_{u}| \leq UCAP$,\\ 
$S_{u} = \{s | s \in S \wedge f(s)=u\}, |S_{u}| \leq UCAP$
\item Every $u,v \in U$ with $u \neq v$ are connected, whenever 
$i \in I_{u} \wedge s \in S_{v} \wedge (i,s) \in E$

\item The connection relation is symmetric, i.e.\ if $u \in U$ is connected to $v \in U$ then $v$ is connected to $u$
 
\item Every unit $u \in U$ is connected to at most $IUCAP$ other units.
\end{itemize}
\end{definition}

The solution function corresponds to a solution graph showing which indicators and sensors are placed on which units and how the units are connected to each other (see Figure \ref{fig:example1}).

\begin{figure}
\center
	\includegraphics[width=8cm]{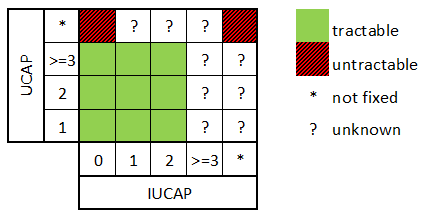}
	\caption{Current complexity landscape for the PUP}
	\label{fig:preoverview}
\end{figure}

Figure \ref{fig:preoverview} shows an overview about the complexity landscape of the partner units problem (PUP). In the most general form, that is when IUCAP and UCAP are part of the input, the PUP is NP-complete. This was shown by reducing bin-packing to the special case where IUCAP = 0, thus when the units in fact constitute bins (\cite{aschinger:tackling}, \cite{garey:np}). Also the special case with IUCAP = 1 is basically the same as with IUCAP = 0. This is due to the fact that cases with IUCAP = 0 can  be transformed to IUCAP = 1.

\begin{corollary}
The PUP is NP-complete when IUCAP = 1 and UCAP is part of the input.
\end{corollary}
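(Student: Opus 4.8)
The plan is to establish the two halves of the claim separately: membership in NP, which is routine, and NP-hardness, which I would obtain by reducing the already-established NP-complete case $IUCAP = 0$ to the case $IUCAP = 1$, following exactly the remark recorded above that instances with $IUCAP = 0$ can be transformed into instances with $IUCAP = 1$. For membership I would take the certificate to be the placement function $f : I \cup S \to U$ together with the connection relation it induces. Given $f$, each of the four conditions of the definition — the capacity bounds $|I_u| \le UCAP$ and $|S_u| \le UCAP$, the requirement that the endpoints of every edge lie on the same or on connected units, symmetry of the connection relation, and the bound that each unit has at most one partner — can be verified by a single pass over $E$ and $U$. Hence the problem lies in NP, in particular when $IUCAP = 1$.

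For hardness I would start from an arbitrary $IUCAP = 0$ instance $P = \langle I,S,E,U,UCAP,0\rangle$, which the bin-packing reduction shows to be NP-complete, and build in polynomial time an instance $P' = \langle I',S',E',U',UCAP',1\rangle$ that is solvable if and only if $P$ is. The guiding idea is that a unit whose single permitted partner link has been \emph{used up} can no longer connect to any genuine unit, and therefore behaves exactly like an $IUCAP = 0$ unit. Accordingly I would augment each unit of $P$ with a private ``blocker'' gadget — a dedicated dummy edge together with an auxiliary dummy unit — whose intended effect is to occupy the one available partner slot, and I would compensate for the room the dummy elements consume by inflating $UCAP'$ by the corresponding constant so that the capacity left for the real elements is unchanged. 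Under this intended structure the partner link is unavailable for real work, and a placement of the original elements solves $P'$ precisely when the corresponding placement solves $P$.

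The main obstacle is that the solution function is free to map any element to any unit, so the construction cannot \emph{declare} where the dummy elements live nor even that the two endpoints of a blocker edge land on \emph{different} units; the whole neutralising effect has to be \emph{forced} indirectly. The core of the proof is therefore a rigidity argument: I would choose the capacities $UCAP'$ and the number of auxiliary units in $P'$ tightly enough that a counting / pigeonhole argument rules out every distribution of the dummy elements except the intended one, after which the two directions of the equivalence ``$P$ solvable $\iff$ $P'$ solvable'' follow. Since the transformation adds only a bounded amount of structure per unit and per edge, the reduction is clearly polynomial, so NP-hardness together with the membership argument yields NP-completeness; should the blocker gadget prove too delicate to force cleanly, the same scheme can alternatively be driven by a direct reduction from bin-packing into the $IUCAP = 1$ setting, with the identical rigidity argument as its crux.
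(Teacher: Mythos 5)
Your NP-membership argument is fine and matches the paper's (which simply declares membership evident). The hardness half, however, has a genuine gap, and it is exactly the one you flag yourself: the ``rigidity'' argument that is supposed to force the blocker gadgets into their intended position is never carried out, and in the form you describe it cannot be. First, the construction is not even well-defined: units in a PUP instance are anonymous and interchangeable --- the input consists only of the bipartite graph, the unit count and the two capacities --- so there is no way to ``augment each unit of $P$ with a private blocker gadget''; all you can do is add dummy elements and edges, and the solution function alone decides which unit hosts what. Second, even granting the setup, a blocker edge $(d_i,d_s)$ consumes a partner link only if its two endpoints land on \emph{different} units, and nothing prevents the solver from placing both endpoints together on one auxiliary unit (consuming no link at all), or from cross-pairing blocker elements among themselves. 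In that scenario every real unit keeps its partner slot free, so $P'$ is solvable whenever the original graph is solvable \emph{with} $IUCAP=1$ --- precisely the extra freedom the reduction must suppress --- and the equivalence with the $IUCAP=0$ instance breaks (there are instances solvable with one partner link but not with none, e.g.\ an indicator adjacent to $2\cdot UCAP$ sensors). Capacity inflation and pigeonhole counting cannot rescue this: capacity constraints are per-unit and are blind to the pairing topology, which is the thing you need to control.

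The paper sidesteps the forcing problem entirely by doing the opposite: instead of \emph{forbidding} the partner link, it \emph{absorbs} it. The instance is doubled --- a dummy $d_i$ per indicator and $d_s$ per sensor, every edge $(i,s)$ replaced by the biclique on $\{i,d_i\}\times\{s,d_s\}$, and twice as many units. With $IUCAP=1$ the units decompose into closed pairs (or singletons), each doubled component must live inside one such pair, and halving the contents of a pair recovers an $IUCAP=0$ packing; conversely, a solution of $P$ lifts by placing the original elements on one unit of a pair and their dummies on its partner. That doubling construction, not a blocker gadget, is the missing idea; your fallback of a direct bin-packing reduction into $IUCAP=1$ would face the same forcing difficulty unless it likewise licenses, rather than blocks, the partner link.
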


\begin{proof}
Membership in NP is evident. For showing completeness, we can transform PUP with IUCAP = 0 to PUP with IUCAP = 1 as follows:

\begin{enumerate}
\item For each unit in the PUP with IUCAP = 0 there are two units for IUCAP = 1.
\item For every indicator $i$ in the input graph for we add a dummy indicator $d_i$ and for every sensor $s$ in the input graph we add a dummy sensor $d_s$.
\item For connected elements $i$ and $s$ we add also an edge between: 
\begin{itemize}
	\item $d_i$ and $d_s$
	\item $i$ and $d_s$
	\item $s$ and $d_i$
\end{itemize}
\end{enumerate}

The new input graphs for IUCAP = 1 have exactly double the size as the original input graphs for IUCAP = 0 but there are also double the number of units available, which can be connected to pairs of units. Thus, a PUP instance with IUCAP = 0 is satisfiable if and only if the corresponding PUP instance with IUCAP = 1 is satisfiable.$\square$ 
\end{proof} 

For the cases with IUCAP = 2 and fixed UCAP a polynomial-time algorithm could be found due to the fact the solution input graphs for such cases are always chains or rings of units \cite{aschinger:tackling}. For the cases with fixed IUCAP $\geq$ 3 complexity remained unclear so far. Also for the complexity with IUCAP = 2 but not fixed UCAP there has not been a complexity result. For practical purposes the fixed parameter complexity when both parameters, i.e.\ IUCAP and UCAP, are fixed to some natural number is of special importance. This is because typically the PUP occurs in application cases where standardized devices (indicators/sensors, communication, units circuit boards, etc.) are used which provide a fixed number of connectors and ports. 

In the following it will be shown that any PUP with unfixed UCAP or with fixed IUCAP $\geq$ 3 is NP-complete. Thus, we will be able to color all white gaps on the complexity landscape in Figure \ref{fig:preoverview}. 

\begin{figure}
\center
	\includegraphics[width=11cm]{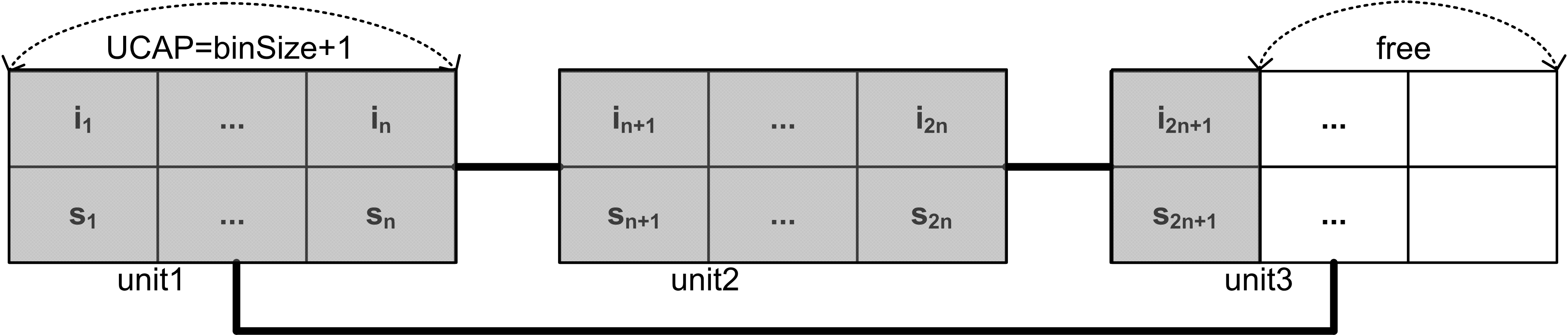}
	\caption{Principle of bins for PUP with IUCAP = 2 and UCAP not fixed}
	\label{fig:bew2}
\end{figure}

\begin{theorem}
The PUP is NP-complete when IUCAP=2 and UCAP is part of the input and thus not fixed.
\end{theorem}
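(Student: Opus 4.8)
The plan is to establish the two halves of NP-completeness separately. Membership in NP is immediate and I would dispatch it first: given a candidate assignment $f$, one checks in polynomial time that each unit carries at most $UCAP$ indicators and at most $UCAP$ sensors, reads off the induced partner relation directly from the edges $E$ (two distinct units are forced to be partners exactly when they host the endpoints of some edge), and then verifies that this relation is symmetric and leaves every unit with at most two partners. All of this amounts to a single scan over $I$, $S$, $E$ and $U$, so the problem lies in NP.

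For hardness I would reduce from a strongly NP-complete packing problem, and I would use 3-PARTITION (equivalently BIN PACKING), since the existence of a polynomial algorithm for fixed $UCAP$ tells us that the difficulty must be located in letting the capacity grow with the instance. The guiding idea, inherited from the $IUCAP=0$ bin-packing reduction, is to let a unit of capacity $UCAP$ play the role of a bin: I would set $UCAP$ equal to the common target sum $B$, create one \emph{bin unit} per part, and represent an item of size $a_j$ by a block of $a_j$ sensors that should come to rest inside a single bin. Because $IUCAP=2$ restricts the partner graph to chains and rings, the bin units can be laid out along a chain (or closed into a ring), and I would add a rigid \emph{frame} of auxiliary indicators and sensors whose only purpose is to pin this layout down and to expose, through the input edges, exactly which block may sit in which bin. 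This is what the \emph{principle of bins} picture is meant to illustrate: how the forced chain of units is carved into bin regions.

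The main obstacle is the converse direction of the equivalence, and it is caused precisely by the feature that distinguishes this case from $IUCAP=0$: since every unit now has two partners, an edge only requires its endpoints to be co-located \emph{or} adjacent, so a naive encoding would let the $a_j$ sensors of an item spill over into neighbouring units, and a satisfying PUP assignment would not have to respect bin boundaries at all. The heart of the construction is therefore a gadget that makes such leaking impossible or pointless. I would exploit the exact-fill character of 3-PARTITION: by choosing the frame so that the total capacity along the chain is consumed exactly when every part is filled to $B$, no unit retains any slack into which a foreign element could leak, which pins each item block entirely within one bin. The delicate part of the write-up, and the step I expect to cost the most, is proving that this frame is \emph{rigid}, i.e.\ that up to the obvious symmetries the units admit essentially one legal arrangement, and that the capacity bookkeeping is tight. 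Once that is in place, a legal PUP assignment reads off as a valid $3$-partition and conversely, which completes the reduction and hence the proof that the problem is NP-complete.
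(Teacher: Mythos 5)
Your NP-membership argument is fine, and reducing from a packing problem with $UCAP$ playing the role of the bin size is the right family of ideas, but the mechanism you propose for the converse direction has a genuine gap. You claim that exact fill (``no unit retains any slack'') pins each item block inside one bin. That is false: leaking does not require slack, because two items can \emph{mutually} straddle a shared bin boundary with zero net slack. Concretely, take bins of size $4$ and items of sizes $3,3,2$ (total $8$, so both bin units are filled exactly): no packing exists, yet in your chain layout there is an assignment in which the first item places two sensors on bin unit $U_1$ and one on the adjacent bin unit $U_2$, the second item does the reverse, and the size-$2$ item splits one sensor to each unit. Every communication requirement of each item is met by co-location or adjacency, both units are exactly full, and nothing reads off as a valid partition. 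So exact fill --- and hence the appeal to 3-PARTITION --- does not do the isolating work; you would additionally need saturated separator units between consecutive bins, and proving that such a frame is rigid is precisely the step you deferred as ``the delicate part.'' (Two smaller issues: a ``block of $a_j$ sensors'' needs a connecting gadget, e.g.\ one indicator joined to all $a_j$ sensors, or the block is not held together at all; your instinct to use a strongly NP-complete source problem is sound, since items are represented in unary, but it does not repair the straddling problem.)

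The paper sidesteps leaking entirely by exhausting the \emph{partner} capacity rather than the unit capacity. It sets $UCAP = binSize + 1$ and builds, for each bin, a complete biclique on $2 \times UCAP + 1$ fresh indicators and $2 \times UCAP + 1$ fresh sensors: since each indicator must see $2 \times UCAP + 1 > 2 \times UCAP$ sensors and $IUCAP = 2$, the bin structure is forced onto a closed ring of exactly three units with no free partner connections remaining. An item (an indicator joined to $n$ fresh sensors) placed into such a ring therefore cannot reach any unit outside it, so no exact-fill bookkeeping is needed: each ring simply retains $UCAP - 1 = binSize$ free sensor slots, items assigned to a ring sum to at most $binSize$, and with exactly $3k$ units available the PUP is solvable iff the packing fits into $k$ bins. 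If you wanted to salvage your chain-with-walls design you would have to prove the walls are forced and saturated, which amounts to rebuilding the paper's ring argument in a harder setting; the closed-ring gadget is the missing idea.
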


\begin{proof}
Membership in NP is evident. For NP-completeness we reduce from bin-packing given by a set of natural numbers $N=\{n_1 \dots n_n\}$ representing the item sizes, the bin size $binSize$ and the number of bins $k$. The PUP instance is produced as follows:

\begin{itemize}
\item Set $UCAP=binSize+1$ and $IUCAP=2$
\item For every item of size $n \in N$ produce a new indicator $i$ and $n$ bicliques $(i,si_{1}) \dots (i,si_{n})$ with $si_{1}...si_{n}$ constituting fresh sensors.
\item For each bin introduce $(2 \times UCAP) + 1$ fresh indicators and $(2 \times UCAP) + 1$ fresh sensors and produce bicliques between all those indicators and sensors. As every indicator and sensor is connected to $(2 \times UCAP) + 1$ elements, every unit has exactly 2 partner units. Furthermore, as all elements share the same elements they also share the same partner units which results in a ring structure consisting of 3 units (see Figure \ref{fig:bew2}). For every such structure there remain $UCAP-1 = binSize$ free slots for placing the item structures. Note that the free slots may be arbitrarily distributed on the units.
\end{itemize}

There exists a packing with $k$ or fewer bins iff there exists a solution to the PUP with $3 \times k$ units.$\square$

\end{proof}

For proving NP-completeness results for PUP instances with IUCAP $\geq$ 3, we use a special form of bin-packing.

\begin{lemma}
\label{lem1}
Bin packing is NP-complete when item sizes and bin sizes are even.
\end{lemma}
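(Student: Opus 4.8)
The plan is to prove NP-completeness by exhibiting a polynomial-time reduction from general bin packing, which is already known to be NP-complete (see \cite{garey:np}), into the restricted version in which all item sizes and the bin size are even. Membership in NP is immediate, since the even-valued variant is a special case of bin packing and any proposed assignment of items to bins can be checked in polynomial time by summing the sizes placed in each bin and comparing the count of non-empty bins to $k$.

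For the reduction, given an arbitrary bin-packing instance with item sizes $N = \{n_1, \dots, n_n\}$, bin size $binSize$ and bin count $k$, I would simply scale everything by a factor of two: produce the new item sizes $2 n_1, \dots, 2 n_n$, the new bin size $2 \cdot binSize$, and keep the same number of bins $k$. Every value produced this way is even by construction, and each number grows by at most one bit, so the transformation clearly runs in polynomial time.

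The correctness argument is that scaling by a positive constant preserves the packing structure exactly. For any candidate distribution of items into bins, a subset $B$ of items satisfies $\sum_{j \in B} n_j \leq binSize$ if and only if $\sum_{j \in B} 2 n_j \leq 2 \cdot binSize$. Hence a partition of the original items into at most $k$ bins exists iff the corresponding partition of the scaled items into at most $k$ bins exists, and the two instances are equivalent.

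The main obstacle is essentially none beyond the bookkeeping: the only point requiring care is that the reduction genuinely yields even item sizes and an even bin size \emph{simultaneously} while leaving the bin count $k$ untouched, so that no rounding or remainder issues arise. I expect the verification of the ``iff'' to be routine, and the chief purpose of the lemma is to serve as a building block for the subsequent NP-completeness reductions for PUP instances with IUCAP $\geq 3$, where evenness of the quantities is needed to construct the gadgets.
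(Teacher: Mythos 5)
Your proposal is correct and follows essentially the same route as the paper: scaling all item sizes and the bin size by two and observing that $\sum_{j \in B} n_j \leq binSize$ iff $\sum_{j \in B} 2n_j \leq 2 \cdot binSize$. In fact your write-up is slightly cleaner than the paper's, which states the scaled bound as ``$\leq 2 \times k$'' where it plainly means $2 \times b$; your version avoids that slip and additionally makes NP membership explicit.
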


\begin{proof}
Given a regular bin packing problem given by a set of natural numbers $N=\{n_1 \dots n_n\}$ representing the item sizes, the bin size $b$ and the number of bins $k$,
we can transform regular bin packing by multiplying item sizes $n \in N$ and the bin size $b$ with two. Every combination of items which summed up to $\leq b$ in the regular problem sums up to $\leq 2 \times k$ in the transformed problem. Analogously, every combination of items which summed up to $> b$ in the regular problem sums up to $> 2 \times k$ in the transformed problem.$\square$
\end{proof}

Before we introduce the structures which are necessary for transforming bin packing to the PUP, we need the notion of induced input graphs.
Informally, an induced input graph is an input graph that is produced based on a given solution graph (which we call inducing graph) such that there is an edge between an indicator and a sensor in the induced input graph whenever they can communicate in the given solution graph (inducing graph).

\begin{definition}
Given a PUP solution graph $G = <I \cup S,U,V,W>$ which we call \emph{inducing graph}, with the indicators and sensors $I \cup S$, the units $U$, the edges $V$ between $I \cup S$ and $U$, and the edges $W$ between units in $U$, the induced input graph is the bipartite input graph $P = <I,S,E>$ such that $(i,s) \in E$ with $i \in I$ and $s \in S$ iff

\begin{itemize}
\item $(i,u) \in V \wedge (s,u) \in V$ with $u \in U$ or
\item $(i,u1) \in V \wedge (s,u2) \in V$ and $(u1,u2) \in W$ with $u1,u2 \in U$
\end{itemize}
\end{definition}

In order to simulate even sized bins (see Lemma \ref{lem1}) by means of PUP structures we need the graphs defined in Figure \ref{fig:binslots} employing UCAP$=1$ and IUCAP$=3$. Two graphs are to be used in combination. The Figure shows the inducing graphs and the induced input graphs for both. Furthermore, Figure \ref{fig:binslots} introduces the graph condensing symbols which are to be further used for ease of presentation. In the subsequent proofs we exploit the following property of the solution graph given the input graph of Figure \ref{fig:binslots}.

\begin{lemma}
\label{lem2}
The PUP input graphs in Figure \ref{fig:binslots} guarantee that indicator 'a' is placed on a unit that contains a free sensor slot at this stage and which contains two free unit connections. Likewise for the other graph, it is guaranteed that sensor 'b' is placed in a unit that contains a free indicator slot and which contains two free unit connections.
\end{lemma}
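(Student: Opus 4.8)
The plan is to establish the claimed forcing by a rigidity argument: I want to show that, up to the obvious indicator/sensor symmetry, every valid solution function $f$ for the input graph of Figure~\ref{fig:binslots} places the gadget elements on units in essentially the unique way exhibited by the inducing graph, and then simply read off the state of the unit carrying `a'.

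First I would exploit the capacity bookkeeping forced by UCAP$=1$ and IUCAP$=3$. Since UCAP$=1$, every unit carries at most one indicator and at most one sensor, so a unit is completely described by the (at most one) indicator and (at most one) sensor placed on it together with its at most three partner links. I would count the indicators and sensors appearing in the gadget and compare this with the number of units used by the inducing graph; because each unit can absorb only one indicator and one sensor, the comparison pins down how many units are in play and forces most of them to be fully loaded. Any edge $(i,s)\in E$ of the induced input graph then demands, by the solution conditions of the PUP definition, that $f(i)$ and $f(s)$ be either identical or joined by a partner link, which converts the adjacencies of the induced input graph directly into co-location or partner constraints on the units.

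Next I would propagate these constraints. The dense (biclique) part of each gadget creates a cluster of mutually communicating elements whose placement, under UCAP$=1$, is forced to spread across a fixed small set of units wired into the shape shown in the inducing graph; here the IUCAP$=3$ bound is exactly what prevents these units from collapsing onto one another or acquiring extra partners, so the core is rigid. The element `a' attaches to this rigid core only through the specific edges drawn in the induced input graph. I would then verify that these edges force $f(a)$ onto one particular core unit, that this unit has its single sensor slot still empty (no sensor of the gadget is compelled to share it), and that this unit spends only one of its three partner links inside the core, leaving two connections free --- which is precisely the asserted conclusion. The statement about `b' follows verbatim by interchanging the roles of indicators and sensors, since the two graphs of Figure~\ref{fig:binslots} are mirror images under that swap.

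I expect the main obstacle to be the rigidity step: ruling out alternative, non-isomorphic placements of the biclique cluster. The danger is that some elements could be merged onto fewer units, or connected through a different partner pattern that still respects UCAP$=1$ and IUCAP$=3$ yet violates the ``free sensor slot / two free connections'' conclusion. Handling this cleanly requires a short case analysis on how the biclique's indicators and sensors can be distributed over units --- using the fact that two elements adjacent in the induced input graph can neither share a unit (UCAP$=1$ forbids two indicators or two sensors together) nor each take an extra partner link without exhausting the IUCAP budget --- to show that every admissible distribution coincides with the intended one. Once this rigidity is secured, reading off the empty sensor slot and the two free unit connections at `a''s unit is immediate.
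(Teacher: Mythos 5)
Your overall strategy --- a forced-placement (rigidity) argument driven by UCAP$=1$ and IUCAP$=3$ --- is the same one the paper uses, but your proposal stops exactly where the proof has to begin: the sentence ``I would then verify that these edges force $f(a)$ onto one particular core unit'' is the entire content of the lemma, and you explicitly defer it (``the main obstacle\dots requires a short case analysis'') without carrying it out. The paper's proof \emph{is} that case analysis, and its steps are not interchangeable boilerplate: (i) $i1$ must communicate with the four sensors $s1,\dots,s4$, which under UCAP$=1$ sit on four distinct units, so $i1$'s unit $U1$ spends all three of its connections; (ii) $i2$ must then sit on a partner unit $U2$ of $U1$, and the remaining two units $U3,U4$ must connect to $U2$; (iii) $i3,i4$ are forced onto $U3,U4$ because $U1,U2$ have no connections left; (iv) and (v) the subtle exclusions --- $s1,s2$ cannot go to $U3$ or $U4$, and $s3,s4$ cannot be swapped between $U3$ and $U4$ --- hold because either misplacement would force a direct $U3$--$U4$ link, exhausting their connection budgets; (vi) this is precisely what makes the final step work: $i5$ and $s5$ must be co-located on a fresh unit $U5$ joined to both $U3$ and $U4$, leaving $U5$ exactly one free link, so `a' lands on a fresh unit attached to $U5$ with its sensor slot empty and two of its three connections unused. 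Your proposal names the danger (alternative distributions of the biclique elements) but never rules out these specific alternatives, so as written it is a plan rather than a proof.

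Two smaller points. First, your suggestion to ``count the indicators and sensors\dots and compare this with the number of units used by the inducing graph'' to pin down how many units are in play does not work inside the lemma: the lemma must hold for an arbitrary pool of available units (the global unit count is fixed only later, in the reduction of Theorem \ref{theorem2}), and the paper's forcing is purely local, coming from communication degrees saturating IUCAP, not from a global census. Second, `a' is not forced onto a ``core unit'': every unit $U1,\dots,U5$ already hosts an indicator, so under UCAP$=1$ the element `a' necessarily occupies a \emph{new} unit whose single used connection goes to $U5$ --- a small but telling sign that the unexecuted case analysis is where the real work lives. The symmetry claim for `b' is fine and matches the paper's treatment.
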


\begin{proof}
\label{prooflem2}
For the ease of understanding the major steps of the proof are visualized by Figure \ref{fig:proof-lemma}.

\begin{enumerate}
\item $i1$ must communicate to all sensors in $\{s1,s2,s3,s4\}$ and as a consequence the unit hosting $i1$ must be connected to three other units and each of the four units must contain one of the sensors in $\{s1,s2,s3,s4\}$. We call the unit hosting $i1$ $U1$.
\item As $i2$ is also connected to all sensors in $\{s1,s2,s3,s4\}$ and $U1$ will also host one of the sensors in $\{s1,s2,s3,s4\}$, $i2$ must be placed on one of the three partner units of $U1$. We call the unit hosting $i2$ $U2$. The other two units have to be connected to $U2$. We call these units $U3$ and $U4$, depending on which element ($i3$ or $i4$ is placed on the unit).
\item $i3$ respectively $i4$ must be placed on $U3$ and $U4$ because $i3$ as well as $i4$ have to communicate to three sensors in $\{s1,s2,s3,s4\}$. As both $U1$ and $U2$ do not allow any more unit connections, $i3$ as well as $i4$ have to be on an already connected partner unit, i.e. $U3$ or $U4$.
\item $s1$ as well as $s2$ must go to $U1$ or $U2$ and not to $U3$ or $U4$ as $s1$ and $s2$ have to communicate to all four indicators in $\{i1,i2,i3,i4\}$. Placing $s1$ or $s2$ on $U3$ or $U4$ would demand that $U3$ gets connected to $U4$, making it impossible to connect any further elements.
\item Similarly, $s3$ must be placed on $U3$ (together with $i3$) and $s4$ must go to $U4$ (together with $s4$) and not the other way round. Placing $s3$ on $U4$ and $s4$ on $U3$ would again require that $U3$ and $U4$ are connected, making it impossible to connect any further elements.
\item As $U3$ as well as $U4$ have only one free unit connection left at this stage, and because both $i3$ and $i4$ have to communicate to $s5$ and furthermore because both $s3$ and $s4$ have to communicate to $i5$ the only possibility is to place $i5$ and $s5$ on the same unit ($U5$) which is connected to $U3$ and $U4$. As a consequence $U5$ has only one free unit connection left which is to be used for connecting the 'a' or 'b' element.$\square$
\end{enumerate}
\end{proof}

\begin{figure}
\center
	\includegraphics[width=13cm]{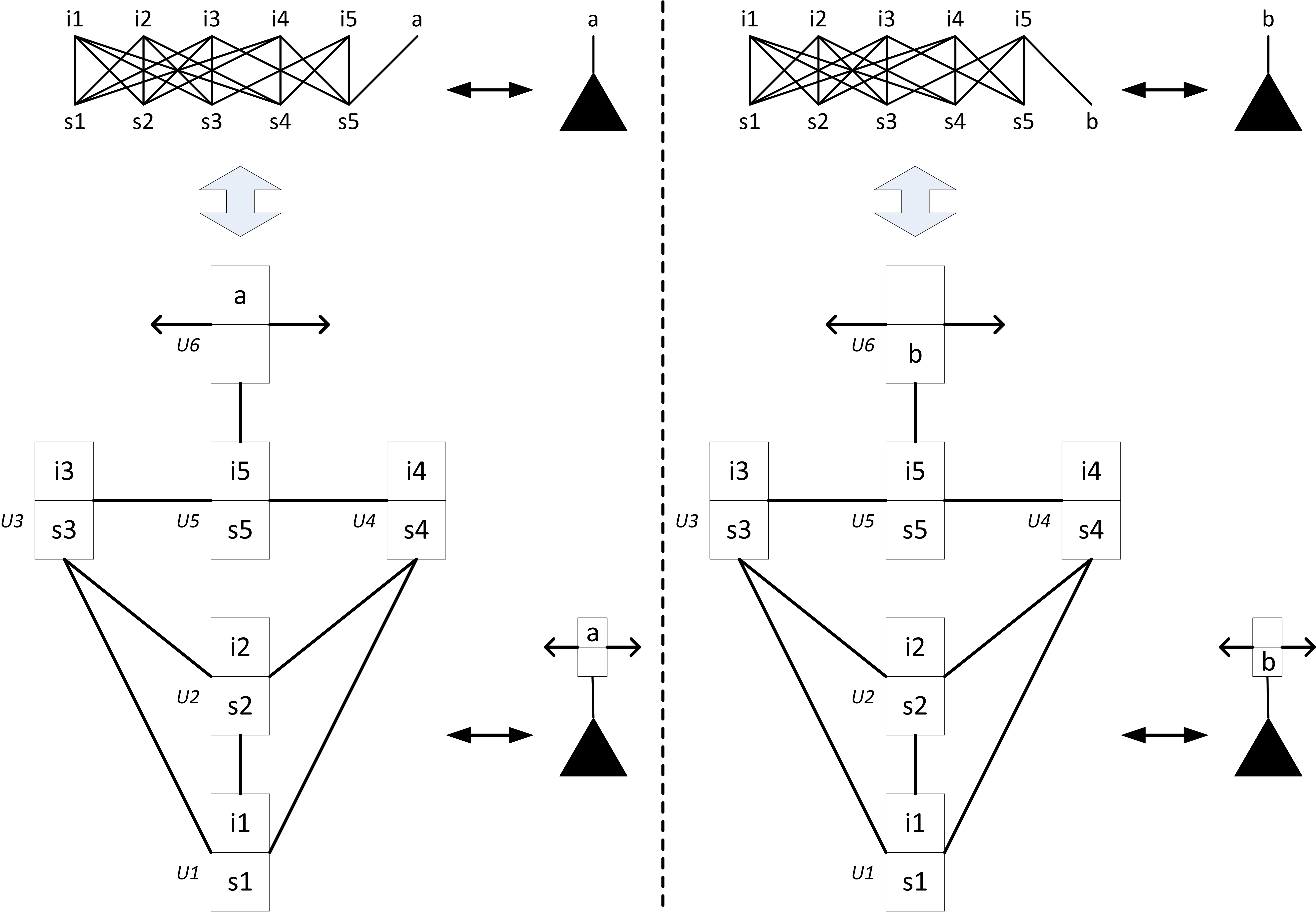}
	\caption{Inducing graphs and corresponding input graphs used for simulating bins}
	\label{fig:binslots}
\end{figure}

\begin{figure}
\center
	\includegraphics[width=13cm]{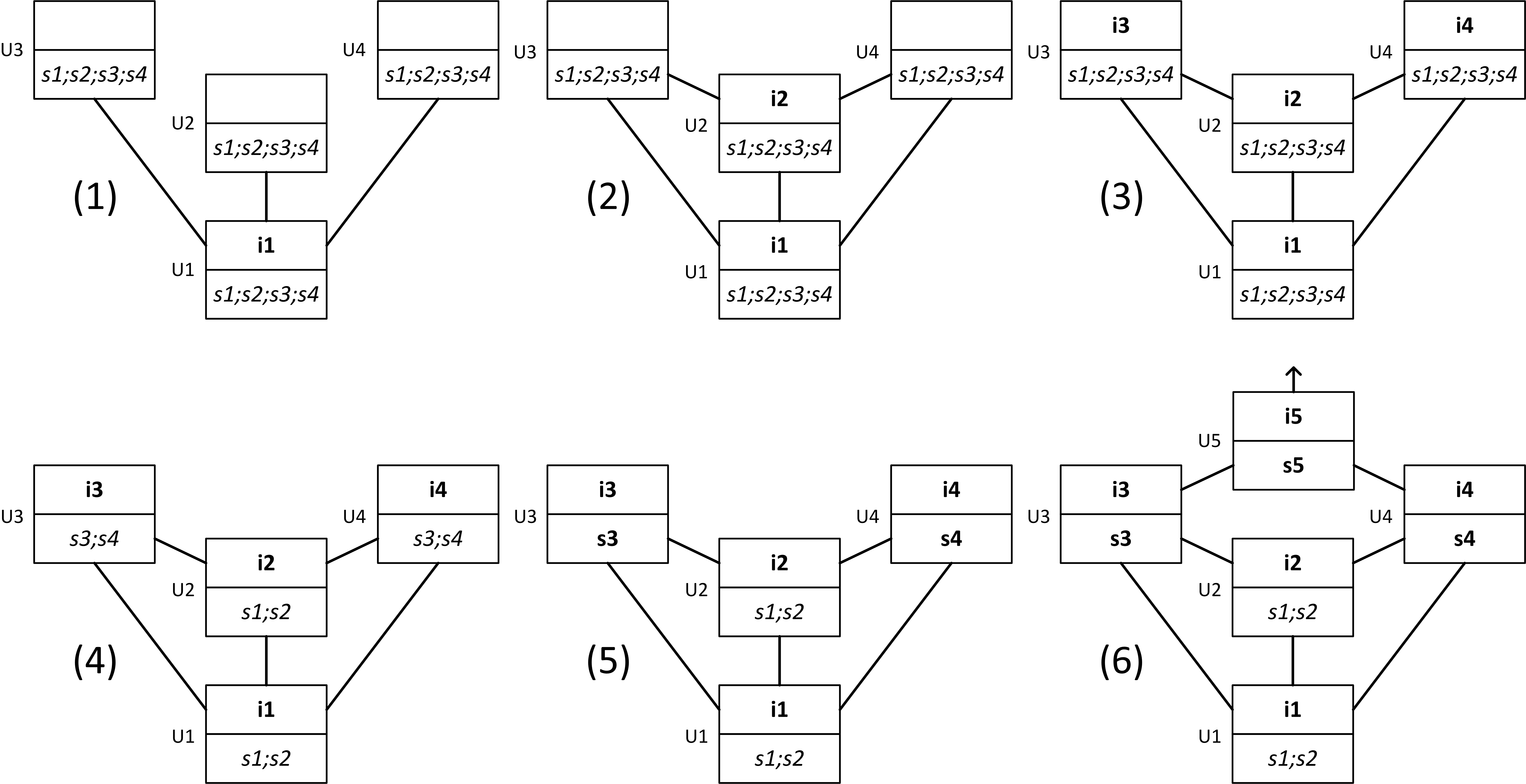}
	\caption{Proof of Lemma \ref{lem2}. ';' stands for a logical 'or'.}
	\label{fig:proof-lemma}
\end{figure}

Building on the graphs in Figure \ref{fig:binslots}, the main principle of simulating any even-sized bin is shown in Figure \ref{fig:bin}. The left-over unit slots are positioned in such a way that there is enough connected space to host additional indicator-sensor chains of maximal length equal to bin size, hence items are to be represented by chains of indicators and sensors. Figure \ref{fig:filledbin} gives a small example with one bin of size = 4 and one item of size = 4.

\begin{figure}
\center
	\includegraphics[width=13cm]{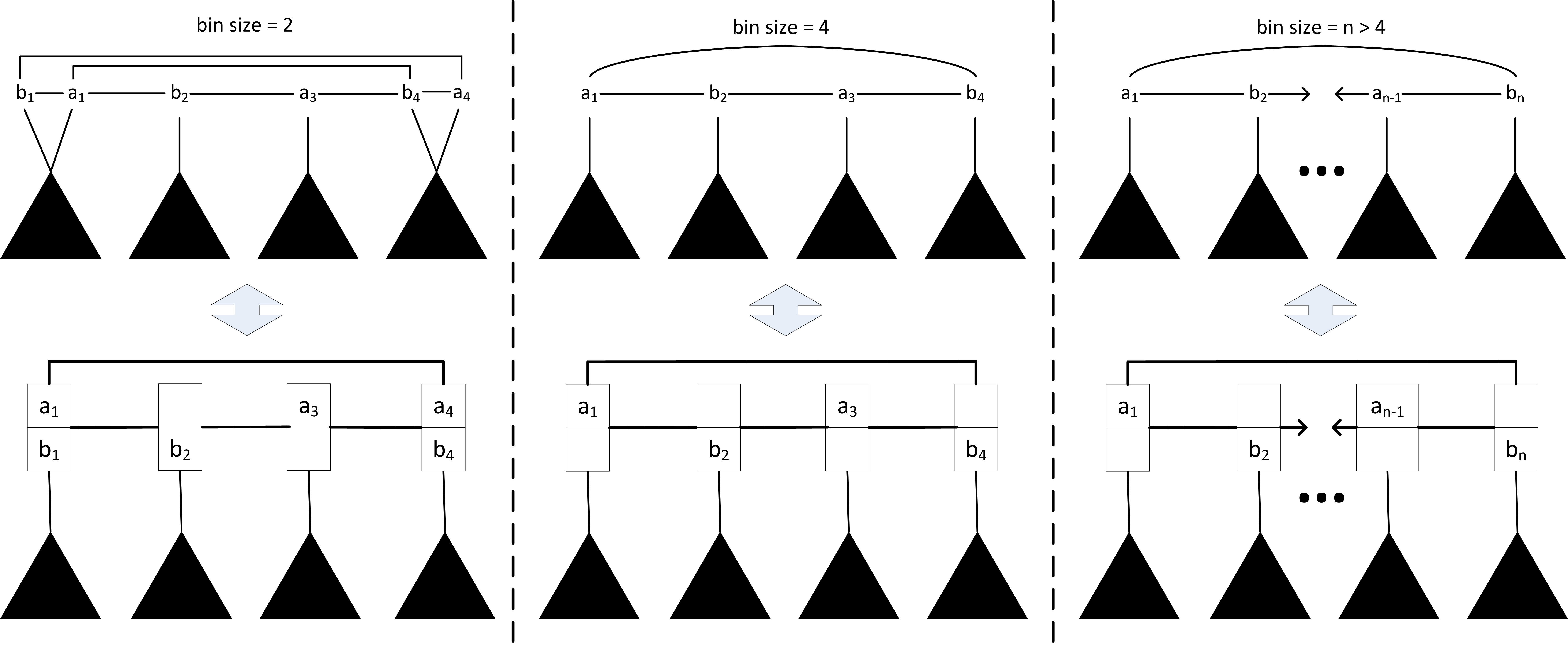}
	\caption{Simulating bins on the basis of the graphs given in Figure \ref{fig:binslots}}
	\label{fig:bin}
\end{figure}

\begin{figure}
\center
	\includegraphics[width=13cm]{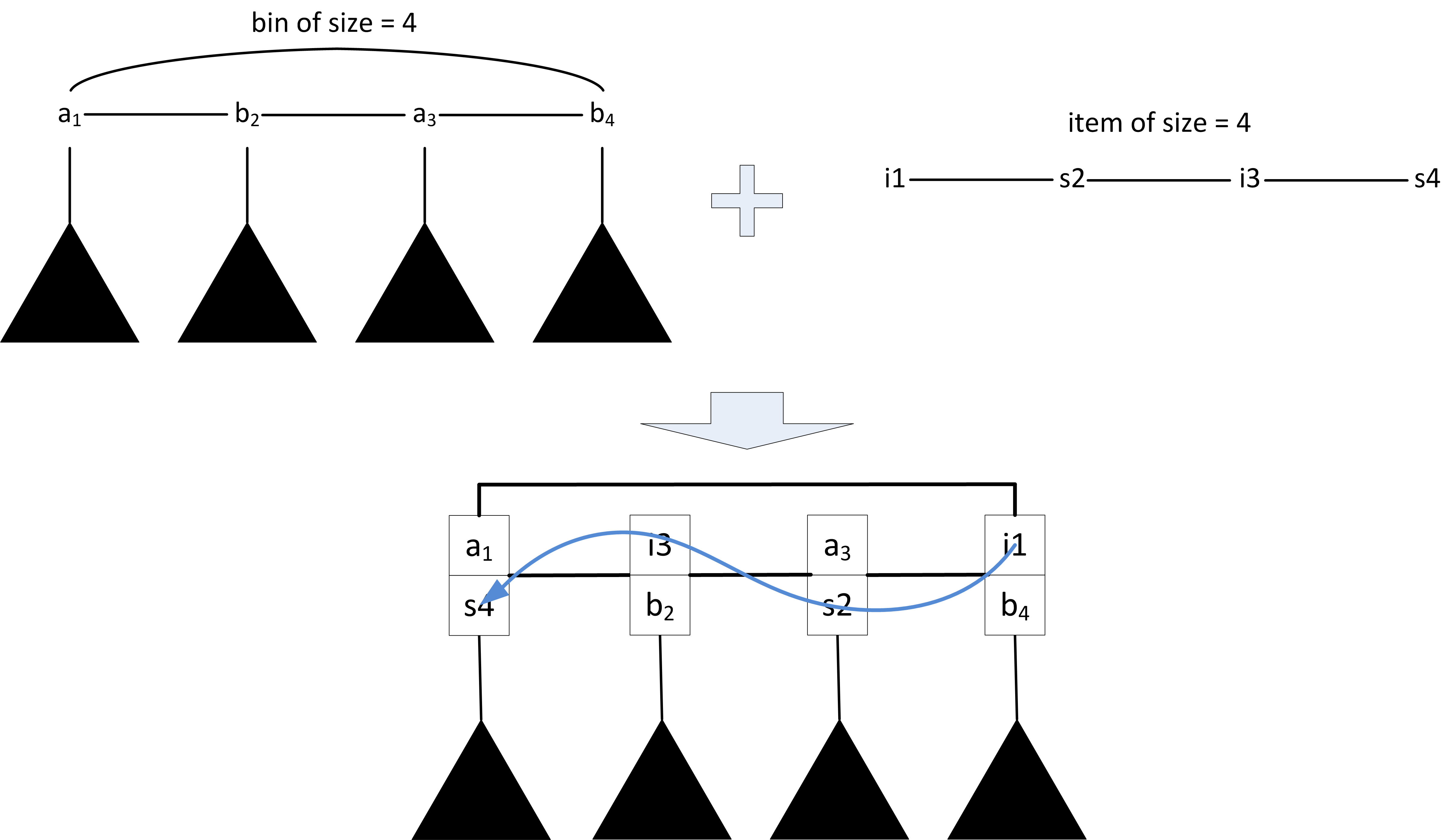}
	\caption{An item fitting in a bin}
	\label{fig:filledbin}
\end{figure}

\begin{theorem}
\label{theorem2}
The PUP is NP-complete when IUCAP = 3 and UCAP = 1.
\end{theorem}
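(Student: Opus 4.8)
The plan is to reuse the reduction template of the two preceding proofs, but to replace plain bin packing by its even-sized restriction from Lemma~\ref{lem1} and to replace the ring-of-three gadget by the rigid structures of Figure~\ref{fig:binslots}. Membership in NP is immediate, since a candidate solution function $f\colon I\cup S\rightarrow U$ together with the induced connection relation can be checked against all four constraints of the definition in polynomial time. The substance of the argument lies entirely in the hardness direction.

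Given an even-sized bin packing instance with item sizes $N=\{n_1,\dots,n_n\}$, even bin size $b$ and $k$ bins (legitimate by Lemma~\ref{lem1}), I would build a PUP instance with UCAP$=1$ and IUCAP$=3$ as follows. For each of the $k$ bins I instantiate one copy of the bin gadget of Figure~\ref{fig:bin}, which is glued together from the two rigid subgraphs of Figure~\ref{fig:binslots}. By Lemma~\ref{lem2} each such subgraph is forced, up to symmetry, into a fixed unit layout in which its distinguished element $a$ or $b$ lands on a unit that retains a free complementary slot and exactly two free unit connections. Chaining $b/2$ copies of the paired building block as in Figure~\ref{fig:bin} then yields, for each bin, a rigidly determined block of units offering a single connected strand of exactly $b$ free slots ($b/2$ free indicator slots and $b/2$ free sensor slots) and no other spare capacity. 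Each item of even size $n$ is encoded as an alternating indicator--sensor chain of length $n$ (Figure~\ref{fig:filledbin}); since UCAP$=1$, such a chain must occupy $n$ consecutive free slots along one strand. The whole construction is clearly polynomial in the size of the bin packing instance.

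The equivalence to establish is that the bin packing instance admits a packing into $k$ bins if and only if the PUP instance is satisfiable. The forward direction is routine: a packing assigns to each bin items of total size at most $b$, and the corresponding item chains then fit into the $b$-slot strand of the matching block. For the reverse direction one starts from a PUP solution and, for every block, collects the item chains occupying its free strand; here the constraints UCAP$=1$ and IUCAP$=3$ must be shown to force each chain to lie entirely within a single block and to cap the total occupied length per block at $b$, so that reading off the map from chains to blocks yields a valid packing into $k$ bins.

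The step I expect to be the main obstacle is exactly this reverse direction, and within it the claim that item chains respect bin boundaries and cannot overfill a block. Everything hinges on the rigidity supplied by Lemma~\ref{lem2}: once the gadget layout is pinned down, the only free unit connections that remain are internal to a block, so an indicator--sensor chain that enters a block is trapped inside it and its length is bounded by the block's capacity. This is also where the even-sizing of Lemma~\ref{lem1} earns its keep, since the strict alternation of a chain under UCAP$=1$ must be matched against the parity of the free-slot pattern: $b/2$ must be an integral number of gadget pairs and the chain must balance indicator and sensor slots, so that the strand accommodates precisely the even lengths up to $b$ while leaving no loophole for a chain to escape a block or to exceed its capacity. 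Verifying this parity match is the delicate combinatorial heart of the proof.
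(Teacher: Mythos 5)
Your proposal is correct and follows essentially the same route as the paper: reduction from even-sized bin packing (Lemma~\ref{lem1}), bin gadgets assembled from the rigid structures of Figure~\ref{fig:binslots} via Lemma~\ref{lem2}, items as alternating indicator--sensor chains, and the confinement/parity argument you flag as delicate is exactly the step the paper settles by fixing the number of units. The one detail to make explicit is that the PUP instance must supply \emph{exactly} the gadget units --- $(6 \times binSize) \times k$, or $(6 \times 4) \times k$ when $binSize = 2$ --- since it is this unit count, not the gadget rigidity alone, that prevents item chains from escaping onto spare units.
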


\begin{proof}
\label{prooftheorem2}
Membership in NP is evident. For completeness we reduce from bin packing given by a set of natural numbers $N=\{n_1 \dots n_n\}$ representing the items and corresponding (even) item sizes, the (even) bin size $binSize$ and the number of bins $k$. The PUP instance is produced as follows:

\begin{enumerate}
\item For every bin introduce a new structure based on the graphs given in Figure \ref{fig:binslots} and Figure \ref{fig:bin}.
\item For every item size $n \in N$ introduce a set of fresh indicators $i_{n_{1}} i_{n_{n/2}}$ and a set of fresh sensors $s_{n_{1}} \dots s_{n_{n/2}}$ and connect them such that they build a chain, i.e.\ $i_{n_{1}}$ is connected to $s_{n_{1}}$, $s_{n_{1}}$ is connected to $i_{n_{2}}$, $i_{n_{2}}$ is connected to $s_{n_{2}}$, \dots, $i_{n_{n/2}}$ is connected to $s_{n_{n/2}}$.
\end{enumerate}

As there are only bins and items of even size each chain begins with an indicator and ends with a sensor or vice-versa. This facilitates avoiding gaps when filling the bins.
When $binSize = 2$, the bin packing instance is solvable with $k$ bins iff the corresponding PUP is solvable with $(6 \times 4) \times k$ units. When $binSize \geq 4$, the bin packing instance is solvable with $k$ bins iff the corresponding PUP is solvable with a set of $(6 \times binSize) \times k$ units.$\square$
\end{proof}

In order to generalize the proof to UCAP $>$ 1 we only have to show how bins are represented for such cases, i.e.\ it suffices to define corresponding inducing graphs which induce input graphs simulating bins.

\begin{corollary}
\label{corollary2}
The PUP is NP-complete when UCAP fixed to some $m \geq 2$ and IUCAP = 3.
\end{corollary}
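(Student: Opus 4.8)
The plan is to keep the entire reduction of Theorem \ref{theorem2} intact and to change only the gadget that simulates a bin, exactly as the preceding remark suggests. Membership in NP is again immediate. As before I would reduce from even bin packing (Lemma \ref{lem1}), represent each item of size $n$ by a fresh indicator/sensor chain of length $n$ beginning with an indicator and ending with a sensor, and keep the bin gadgets mutually disjoint so that no chain can be split across two bins (a chain forces consecutive elements onto connected units, and units of distinct bin gadgets are never forced to be connected). The global \emph{iff} --- a packing with $k$ bins exists iff the induced PUP instance is solvable with the corresponding, still polynomially bounded, number of units --- then follows by the same argument, provided the generalized bin gadget again exposes exactly $binSize$ connected free slots arranged so that item chains can snake through them without gaps.

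The only new work is to redefine the inducing graphs of Figure \ref{fig:binslots} (and hence the bin assembly of Figure \ref{fig:bin}) for $UCAP=m$. I would do this in two moves. First, I would scale the forcing bicliques by the factor $m$: the counting fact underlying Lemma \ref{lem2} --- that an indicator adjacent to $4$ sensors must sit on a unit that uses all three partner connections, and that all four units in the cluster are saturated --- generalizes to ``an indicator adjacent to $4m$ sensors forces its host unit to use all $IUCAP=3$ connections, with the host and its three partners each holding $m$ sensors'', since a unit holds at most $m$ sensors and reaches at most $3m$ further sensors through its partners. Re-running the six-step case analysis in the proof of Lemma \ref{lem2} with $m$-fold capacities should pin down the same cluster topology up to symmetry. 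Second, on each backbone unit that is meant to receive item elements I would install a forced dummy biclique occupying $m-1$ of its $m$ indicator slots and $m-1$ of its $m$ sensor slots, so that every backbone unit exposes precisely one free indicator slot and one free sensor slot. This reproduces the $UCAP=1$ free-slot pattern, the item chains can then be placed verbatim as in Theorem \ref{theorem2}, and the unit count is simply rescaled by the appropriate constant depending on $m$ and $binSize$.

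The main obstacle is the rigidity argument, i.e.\ the analogue of Lemma \ref{lem2} for general $m$. Because a unit of capacity $m$ admits far more ways of distributing indicators and sensors than a unit of capacity $1$, I must verify that the scaled forcing bicliques still admit essentially only the intended placement and --- crucially --- that the dummy fillers are genuinely forced onto their intended backbone units rather than being free to migrate. The gadget has to leave exactly $binSize$ connected free slots laid out along a connected path: if it accidentally left more connected free capacity, an over-sized collection of items could be packed; if it left fewer, a legal packing could be rejected. Establishing that the generalized induced input graph forces precisely this amount of connected free space, with no spare inter-unit connections available to link distinct bin gadgets, is the delicate counting step that carries the reduction.
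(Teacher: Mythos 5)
Your proposal is essentially the paper's own proof: the paper lifts Lemma~\ref{lem2} and Theorem~\ref{theorem2} to any UCAP $= m \geq 2$ precisely by replacing each single indicator/sensor in the gadgets of Figure~\ref{fig:binslots} with a set of $m$ indicators/sensors (your ``scaled forcing bicliques,'' justified by the same $(IUCAP+1)\cdot UCAP$ counting fact) and by using the gadgets of Figures~\ref{fig:generalbinslots} and~\ref{fig:generalbin}, in which every unit is full except that the a-units expose one free sensor slot and the b-units one free indicator slot --- a cosmetic redistribution of your one-free-indicator-plus-one-free-sensor backbone units, with the chain-based reduction from even bin packing kept verbatim. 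The rigidity verification you flag as the main remaining obstacle is exactly what the paper also leaves implicit, asserting only that the case analysis of Proof~\ref{prooflem2} and Proof~\ref{prooftheorem2} goes through with sets substituted for single elements.
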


\begin{proof}
\label{proofcorollary2}
The proof is basically the same as for UCAP = 1 by using the graphs in Figure \ref{fig:generalbinslots} and Figure \ref{fig:generalbin} and thus lifting Lemma \ref{lem2} as well as Theorem \ref{theorem2} to any UCAP $\geq 2$. The corresponding proofs can be lifted by following the structure in Proof \ref{prooflem2} and Proof \ref{prooftheorem2} by simultaneously replacing the indicators/sensors by the corresponding sets of indicators/sensors.$\square$
\end{proof}

\begin{figure}
\center
	\includegraphics[width=13cm]{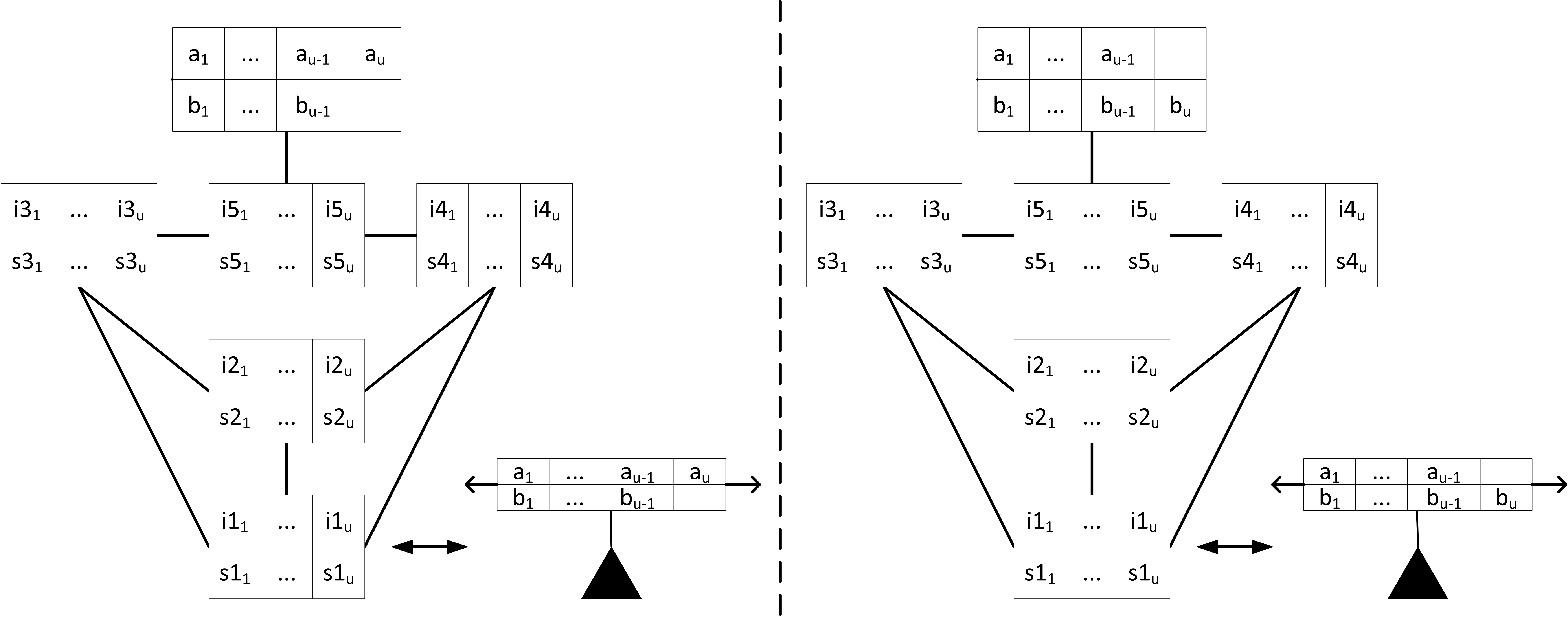}
	\caption{Bins for the PUP with UCAP = u $\geq$ 2}
	\label{fig:generalbinslots}
\end{figure}

\begin{figure}
\center
	\includegraphics[width=13cm]{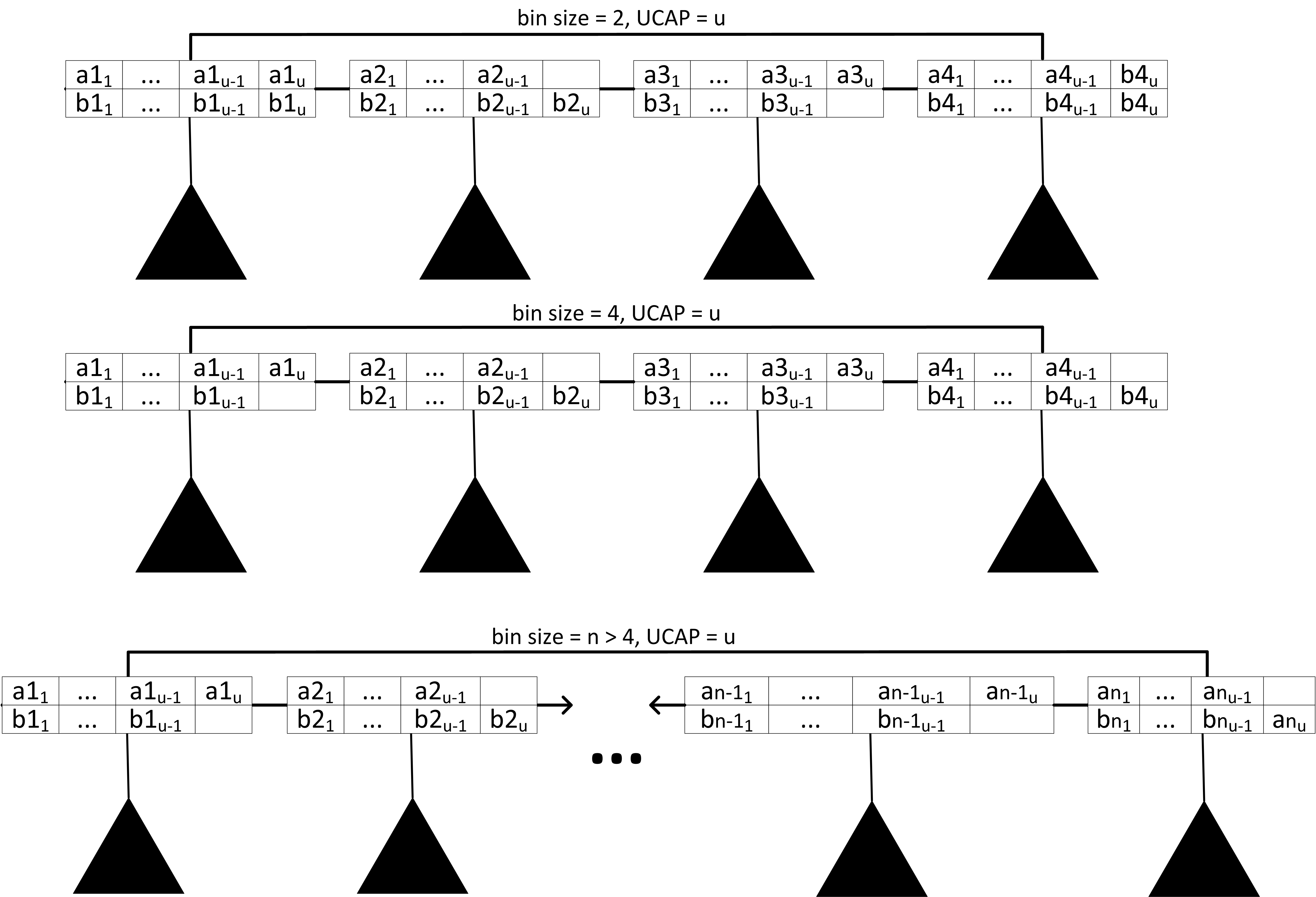}
	\caption{Simulating bins for PUP with UCAP = u $\geq$ 2}
	\label{fig:generalbin}
\end{figure}

For the generalization to any fixed IUCAP $>$ 3 we have to define how the basic graphs in Figure \ref{fig:binslots} respectively Figure \ref{fig:generalbinslots} are evolved in order to 'deactivate' the additional unit connections which come along with a higher IUCAP.

\begin{corollary}
The PUP is NP-complete when UCAP fixed to some $m \geq 2$ and IUCAP is fixed to some $n \geq$ 3.
\end{corollary}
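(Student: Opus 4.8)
\section*{Proof proposal}

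The plan is to reduce from even-sized bin-packing (Lemma \ref{lem1}) once more, reusing the reduction of Corollary \ref{corollary2} for parameters UCAP $=m$ and IUCAP $=3$ as the backbone and adding only machinery that makes a unit with connection budget $n$ behave as though its budget were $3$. Membership in NP is again immediate, since a candidate solution function $f$ can be checked against all four constraints of the PUP definition in polynomial time.

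First I would, for each unit $U$ occurring in the bin-slot inducing graphs of Figure \ref{fig:generalbinslots}, attach $n-3$ dedicated \emph{filler} units $D_{U,1},\dots,D_{U,n-3}$ in the inducing graph and saturate each $D_{U,j}$ completely: fill its $m$ indicator slots and $m$ sensor slots with fresh dummy elements, connect it to $U$, and spend its remaining $n-1$ connections inside a small, internally rigid cluster of further dummy units. The point of this saturation is twofold: a filled $D_{U,j}$ can host no item element, so it is never a usable bin slot, and a $D_{U,j}$ with no free connection cannot serve as a routing shortcut between distant core units. I would place the dummy elements so that, after passing to the induced input graph, the resulting edge set forces the $U$--$D_{U,j}$ adjacency in every solution. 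Feeding this induced input graph to the PUP with UCAP $=m$ and IUCAP $=n$ is the reduction, and since $n$ and $m$ are fixed constants each core unit spawns only $O(n\cdot m)$ fresh elements and $O(n)$ fresh units, so the construction stays polynomial.

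The core of the argument is a rigidity lemma in the spirit of Lemma \ref{lem2}: I would show that any solution of the induced input graph must reproduce the augmented inducing graph, so that each core unit is compelled to devote exactly $n-3$ of its $n$ connections to its fillers. Once this is established, every core unit is left with precisely the three usable connections and the same free element slots as in the IUCAP $=3$ case, whence the bin/chain placement of Theorem \ref{theorem2} and Corollary \ref{corollary2} applies verbatim. Thus the bin-packing instance is solvable with $k$ bins iff the PUP instance is solvable with the correspondingly scaled number of units, namely the unit count of Corollary \ref{corollary2} augmented by the $n-3$ fillers and their fixed-size support clusters per core unit.

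The main obstacle I anticipate is exactly this rigidity step. With a connection budget of $n>3$, a solver gains room to try to exploit the surplus capacity, for instance by short-circuiting two distant core units directly, routing an item chain through an unintended connection, or declining to attach to a filler and reusing that slot elsewhere, and every such alternative must be provably ruled out. The delicate part is therefore to design the filler clusters so tightly, fully saturated in both element and connection capacity and pinned by their induced edges into a unique placement, that the case analysis of Proof \ref{prooflem2} extends under the larger budget and still commits every surplus connection to a filler, leaving the behaviour of the bin core identical to the IUCAP $=3$ construction.
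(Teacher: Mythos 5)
Your overall strategy -- keep the IUCAP $=3$ gadget as the backbone and absorb each core unit's $n-3$ surplus connections with fully saturated dummy (``filler'') units that are pinned in every solution -- is viable in spirit, and in fact it is a per-unit variant of the trick the paper itself uses from IUCAP $=6$ onward (adding fresh units ``filled with fresh elements which are connected to all possible elements'' so that they are pinned, with an induction showing each IUCAP increment's $2(k+1)$ extra connections are exactly the budget of two fresh units). But your proposal has a genuine gap at exactly the point where all the work lies: the rigidity lemma is announced (``I would show that any solution of the induced input graph must reproduce the augmented inducing graph''), never argued, and you yourself list the failure modes (short-circuiting core units, rerouting chains, declining to attach a filler) without ruling any of them out. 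The mechanism that would rule them out is the saturation bound the paper leans on throughout -- an element can communicate with at most $(IUCAP+1)\cdot UCAP$ elements, and an element that attains this bound has its neighbours forced onto exactly $IUCAP+1$ fully packed units -- and your construction happens to make the arithmetic work: a core element adjacent to $4m$ elements in the IUCAP $=3$ gadget gains $m$ neighbours per filler, giving $4m+(n-3)m=(n+1)m$, i.e.\ exact saturation at IUCAP $=n$. You never perform this count, so your claim that ``the resulting edge set forces the $U$--$D_{U,j}$ adjacency in every solution'' is unsupported; without it, nothing prevents a solver from spending a core unit's surplus connections elsewhere, which is precisely what breaks step 1 of the Lemma~\ref{lem2} analysis once $IUCAP>3$.

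Two further unresolved points would also have to be settled. First, the ``small, internally rigid cluster'' carrying each filler's remaining $n-1$ connections is left as a black box, and it threatens a regress: the cluster units have their own surplus capacity, and pinning them requires their elements to be saturated in turn; moreover a naive clique does not close degree-wise (a filler inside an $(n+1)$-clique would need $n+1$ connections once you add its edge to $U$), so you must exhibit a concrete design and check that leftover connections on cluster units are harmless (they are, but only because cluster units are element-full so no item element can ever sit there -- an argument you need to make). Second, the units whose elements are \emph{not} saturated even at IUCAP $=3$ ($U5$ and the $a$-/$b$-units, which deliberately retain free slots and a free connection) cannot be handled by saturation at all; for them the sequential case analysis of steps 4--6 of the Lemma~\ref{lem2} proof must be redone under the larger budget, which is where the paper instead chose to rewire the gadgets explicitly for IUCAP $=4$ and $5$ before inducting. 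So: right family of ideas, plausibly completable, but the proof's heart -- the forced-placement argument -- is missing rather than merely sketched.
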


For the ease of readability we explain the proof with respect to Proof \ref{prooflem2}, Proof \ref{prooftheorem2} and Proof \ref{proofcorollary2}.
Basically, the chains of reasoning for all IUCAPs $\geq$ 3 are similar. The main difference are the input graphs to be used for simulating the bins. As the concrete input graphs are becoming to big and confusing we only define their inducing graphs. Note, that by definition the corresponding input graphs are uniquely determined.
Figure \ref{fig:generaliucap} shows the principle for all fixed IUCAPs $\geq$ 3. The nodes are standing for whole units whereby all units except the 'a-untits' respectively the 'b-units' contain UCAP many sensors and indicators. The a-units have one free sensor slot and the b-units have one free indicator slot. The principle becomes obvious in Figure \ref{fig:generalbinslots}. The graphs for IUCAP = 3 are the same as shown in Figure \ref{fig:binslots} and Figure \ref{fig:bin}. 

\begin{proof}

Having the graphs for IUCAP = 3 as the base, For IUCAP = 4 and IUCAP = 5 the edges slightly change in order to adapt to the higher number of available unit connections and to make sure that there are exactly two left over unit connections for the units containing the $a$ and $b$ elements. 

We begin with the fact that the maximum number of elements (i.e. sensors respectively indicators) some indicator respectively sensor can be connected to is $(IUCAP+1)*UCAP$ (\cite{aschinger:tackling}).

As a direct consequence for IUCAP = 4, all elements of $U1$, $U2$, $U3$, $U4$ and $U5$ in the inducing graph must be placed on exactly 5 units in any solution graph leaving no space for any other elements, as all elements in $U1$, $U2$, $U5$ are connected to exactly UCAP many elements in each unit in ${U1, U2, U3, U4, U5}$.
Let us call these units again $U1-U5$ in order to support the close relation between inducing graphs and solution graphs.
Furthermore, the elements of $\{U1,U2,U5\}$ in the inducing graph are stitched to $\{U1,U2,U5\}$ in the solution graph although the elements can be placed on any unit in $\{U1,U2,U5\}$, similar to the elements in $U1$ and $U2$ in the IUCAP = 3 case (compare Proof \ref{prooflem2}). Placing any of these elements in $U3$ or $U4$ would afford that $U3$ and $U4$ are connected, making further connections impossible. Thus, also the position of the elements of $U3$ and $U4$ in the inducing graph are bound to $U3$ and $U4$ in the solution graph.
Moreover, the elements of $U3$ and $U4$ in the inducing graph cannot be mixed up between $U3$ and $U4$ in the solution graph as this would again afford the connection of $U3$ and $U4$, making further connections impossible. As a consequence all elements of the a-units respectively the b-units in the inducing graph must be placed together on the same unit in the solution graph which is connected to $U3$ and $U4$ and thus there are only two unit connections left over. The rest stays the same as in Proof \ref{prooftheorem2} and Proof \ref{proofcorollary2}.

Beginning with IUCAP = 5, it is necessary to consider two graphs (one for the graph containing the a-unit and one for the graph containing the b-unit) in combination. That the elements of $U1$ in the inducing graph have to be on $U1$ in the solution graph respectively the elements of $U1'$ in the inducing graph have to be together on $U1'$ in the solution graph arises out of the fact that the corresponding elements are to be connected to the maximum possible number of other elements (i.e. $(IUCAP+1)*UCAP$) which must be placed on exactly on $IUCAP+1=6$ units. From the perspective of the elements of $U1$ in the inducing graph these are $U1'$,$U1$,$U2$,$U3$,$U4$ and $U5$ in the solution graph. From the perspective of the elements of $U1'$ in the inducing graph these are $U1$,$U1'$,$U2'$,$U3'$,$U4'$ and $U5'$ in the solution graph. Placing only a single element of $U1$ respectively $U1'$ on a different unit (i.e. not $U1$ respectively $U1'$) in the solution graph would afford at least one additional unit connection which is not possible. Analogous arguments hold for the $U2$ in combination with $U2'$, $U3$ in combination with $U3'$ and $U4$ in combination with $U4'$. As $U1$,$U1'$,$U2$,$U2'$,$U3$,$U3'$,$U4$ and $U4'$ in the solution graph contain all elements from $U1$,$U1'$,$U2$,$U2'$,$U3$,$U3'$,$U4$ and $U4'$ in the inducing graph there is no free space left on these units. Consequently the elements of $U5$ respectively $U5'$ in the inducing graph have to be together on the same unit in the solution graph (i.e. $U5$ respectively $U5'$). As $U3$, $U4$, and $U5$ in the solution graph each have only exactly one free unit connection left for connecting to further elements, all of these elements, i.e. all elements of the a-unit respectively the b-unit in the inducing graph, must be placed together on a single unit in the solution graph, which is connected to $U3$, $U4$, and $U5$. Again, the rest stays the same as in Proof \ref{prooftheorem2} and Proof \ref{proofcorollary2}.

Starting from IUCAP = 5 the arguments can be adapted to IUCAP = 6 by introducing two additional units (see Figure \ref{fig:generaliucap}), one for the (sub)graph including the a-unit and one for the (sub)graph including the b-unit, filled with fresh elements which are connected to all possible elements (sensors to indicators and vice versa) in the $IUCAP+1=6$ units of the (sub)graph. As a consequence the new elements are pinned to the new units in the solution graph. By this trick, the arguments which hold for IUCAP = 5 also hold for IUCAP = 6. Again, the rest stays the same as in Proof \ref{prooftheorem2} and Proof \ref{proofcorollary2}.

What remains to be shown is that the proof for some IUCAP = $x \geq 6$ can be done on behalf of the proof for IUCAP = $x-1$. Hereby it suffices to show that any increase of the IUCAP by one can be compensated by the addition of two new units. Hence, it is to show that, given a graph for some IUCAP = $k$, the number of additional unit connections when setting $k:=k+1$ is exactly $2*(k+1)$, i.e.\ the number of connections of two fresh units. This can be done easily by induction.

As a base case we can use the graph for $IUCAP=k=5$. When setting $k:=(k+1)=6$ there are exactly $2*6$ additional unit connections. For the inductive step we use the hypothesis that, given a consistent graph for $IUCAP=k$, the number of units in this graph is $2*(k+1)$. As every unit gets an additional connection when increasing $IUCAP=k$ by one there are also exactly $2*(k+1)$ additional connections which is exactly the number of connections of two fresh units with $IUCAP=k+1$.$\square$
\end{proof}

\begin{figure}
\center
	\includegraphics[width=10cm]{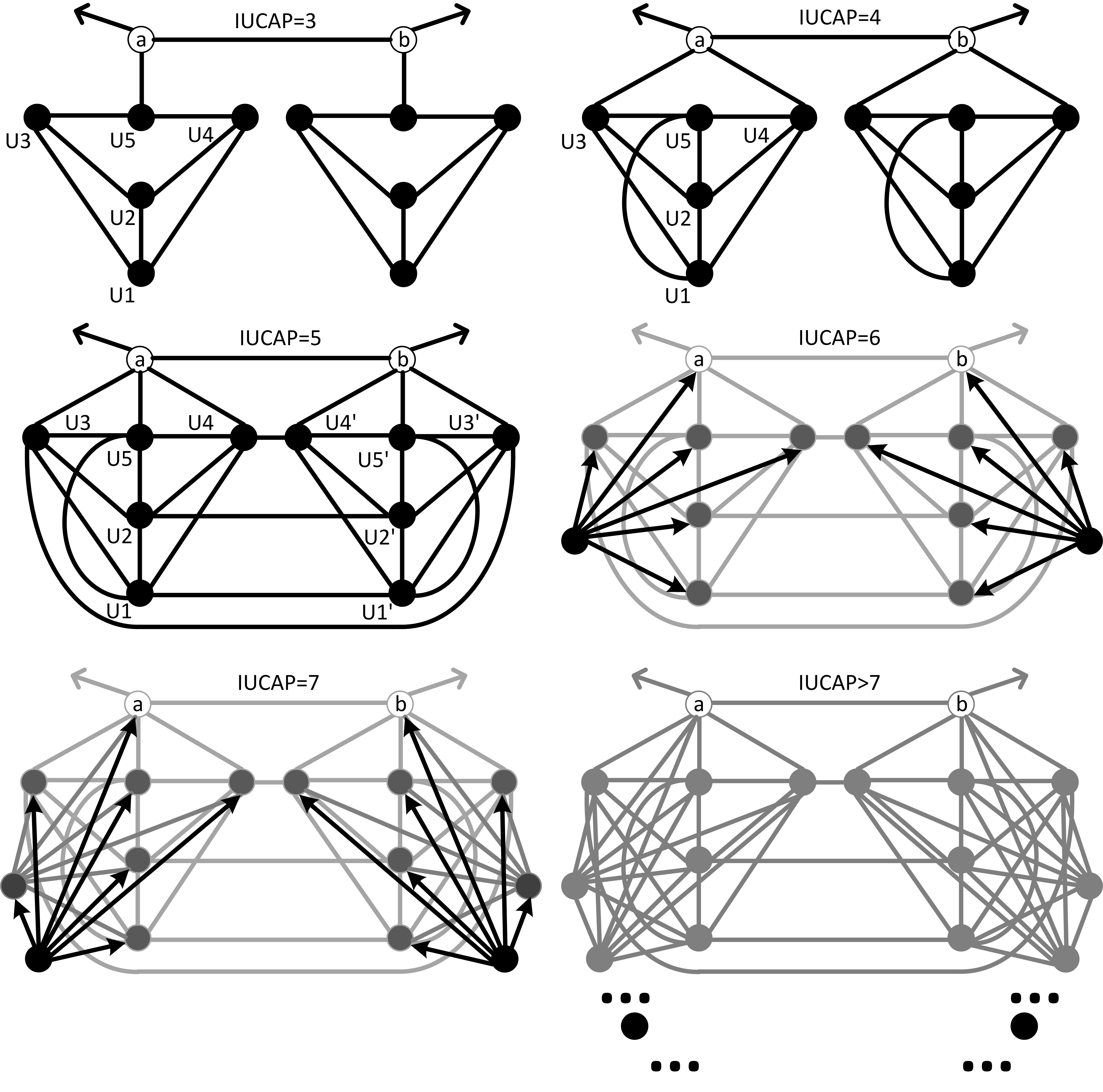}
	\caption{Inducing graphs for any IUCAP $\geq$ 3}
	\label{fig:generaliucap}
\end{figure}

The following consequence follows directly from the presented fixed parameter complexity results.

\begin{corollary}
The PUP is NP-complete when 
\begin{itemize}
\item UCAP is not fixed or
\item IUCAP is not fixed.
\end{itemize}
\end{corollary}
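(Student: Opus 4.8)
The plan is to derive this corollary as an immediate consequence of the fixed-parameter results already established, using the standard observation that a parameterised decision problem in which a parameter is supplied as part of the input is at least as hard as every restriction of it in which that parameter is frozen to a concrete value. Membership in NP is evident in both cases, since a candidate solution function $f$ can be checked against all four constraints of the PUP definition in polynomial time irrespective of whether UCAP or IUCAP is handed over as input or fixed in advance. Hence it only remains to establish NP-hardness for each of the two alternatives in the statement.

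For the case in which UCAP is not fixed, I would observe that every instance of the problem treated in the theorem establishing NP-completeness for IUCAP $= 2$ with UCAP part of the input is simultaneously an instance of the present, more general problem: one merely supplies the same UCAP value together with the constant IUCAP $= 2$. The bin-packing reduction given there therefore carries over verbatim, so the problem with unfixed UCAP inherits its NP-hardness and, with NP-membership, is NP-complete.

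For the case in which IUCAP is not fixed, I would argue symmetrically, this time invoking Theorem \ref{theorem2} (IUCAP $= 3$, UCAP $= 1$). Any PUP instance produced by that reduction is also an instance of the problem in which IUCAP is part of the input, obtained simply by handing over the constant $3$ as the IUCAP value. Consequently the reduction of Theorem \ref{theorem2} again witnesses NP-hardness, which together with NP-membership yields NP-completeness for unfixed IUCAP as well.

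I do not expect a genuine obstacle here, since the whole argument is one of subsumption rather than fresh construction; the only point requiring care is to confirm that the frozen-parameter instances really do lie inside the input space of the corresponding unfixed problem, and this is immediate from the Definition of a PUP instance as a sixtuple in which both UCAP and IUCAP appear explicitly as components. No new gadget, inducing graph, or encoding is needed beyond those of the theorems cited.
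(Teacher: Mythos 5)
Your proposal is correct and matches the paper exactly: the paper offers no separate proof, stating only that the corollary ``follows directly from the presented fixed parameter complexity results,'' which is precisely the subsumption argument you spell out (NP-membership plus hardness inherited from Theorem 1 for unfixed UCAP and Theorem \ref{theorem2} for unfixed IUCAP). Your version is simply a more explicit writing-out of the same reasoning, and it contains no gap.
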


The new complexity landscape for the PUP is shown in Figure \ref{fig:overview}. Hence, complexity of the PUP has been completely clarified.

\begin{figure}
\center
	\includegraphics[width=8cm]{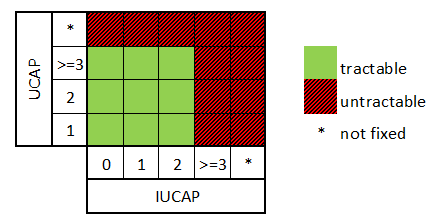}
	\caption{New complexity landscape for the PUP}
	\label{fig:overview}
\end{figure}

\section{QuickPup: A Heuristic Backtracking Search Algorithm}
\label{secQP}

The hardness results of the last section explain why common AI techniques like Constraint Programming, Answer Set Programming, or Integer Programming are often not applicable on real world problem instances with hundreds or thousands of indicators and sensors. Interestingly, human experts in real world PUP domains are often able to produce solutions easily even for big PUP instances where common AI techniques fail. Obviously, human experts use highly efficient heuristics to master this challenge. Out of this, efforts have been made for developing some heuristic algorithm in order to tackle real world sized problems.
 
One result of these efforts is QuickPup (QP), a novel heuristic algorithm for tackling the PUP\footnote{QuickPup has been first introduced in \cite{teppan:quickpup}.}. QP basically follows a backtracking search approach but combines it with a static heuristic ordering of the indicators and sensors (elements). Based on this fixed ordering, QP tries to assign each element to a unit and backtracks in case of unsatisfiability.

\begin{algorithm}\caption{QuickPup: Main}\label{alg:main}\small
\begin{algorithmic}
\STATE INPUT: indicators, sensors, edges, ucap, iucap, maxTime, maxUnits
\STATE
\STATE	timeslice $\gets$ maxTime DIV numberOf(indicators)
\STATE
	\FORALL{startIndicator IN indicators}
\STATE		elements $\gets$ GetBreadthFirstOrder(startIndicator, indicators, sensors, edges)
\STATE 		index $\gets$ firstIndexOf(elements)
\STATE		model $\gets$ \{\} \%\%model is a global variable
\STATE		stopTime $\gets$ SystemTime + timeslice
\STATE		status $\gets$ Assign(elements, edges, ucap, iucap, model, index, stopTime, maxUnits)
\STATE		
		\IF{status = TRUE}
\STATE		Minimize(model)
\RETURN model
		\ELSIF{status = FALSE}
\RETURN FALSE
		\ELSIF{status = TIMEOUT}
\STATE			Continue
		\ENDIF	
		\ENDFOR
\RETURN TIMEOUT
\end{algorithmic}
\end{algorithm}

Algorithm \ref{alg:main} depicts the main procedure of QP. The input consists of a set of indicators, a set of sensors, the edges of the corresponding bipartite input graph, the unit and inter unit capacities and a maximal time limit (maxTime) for solution calculation, and a maximal number of units to be used (maxunits). The first important extension to simple backtracking is to restart the backtracking process from a different entry point if no solution can be found within a certain time slice. If unsatisfiability is proven, no further enrtry points are investigated. In QP each indicator constitutes a possible entry point (startIndicator). For each entry point there is a maximal timeslice of \textit{maxTime DIV number of indicators}. Furthermore, the algorithm produces a different breadth-first ordering of the indicators and sensors (elements) for each entry point. Note that if a concrete implementation of QP is multi-threaded, maxTime and the time slices are not needed, as the algorithm may start from each entry point concurrently.

\begin{figure}
\center
	\includegraphics[width=8cm]{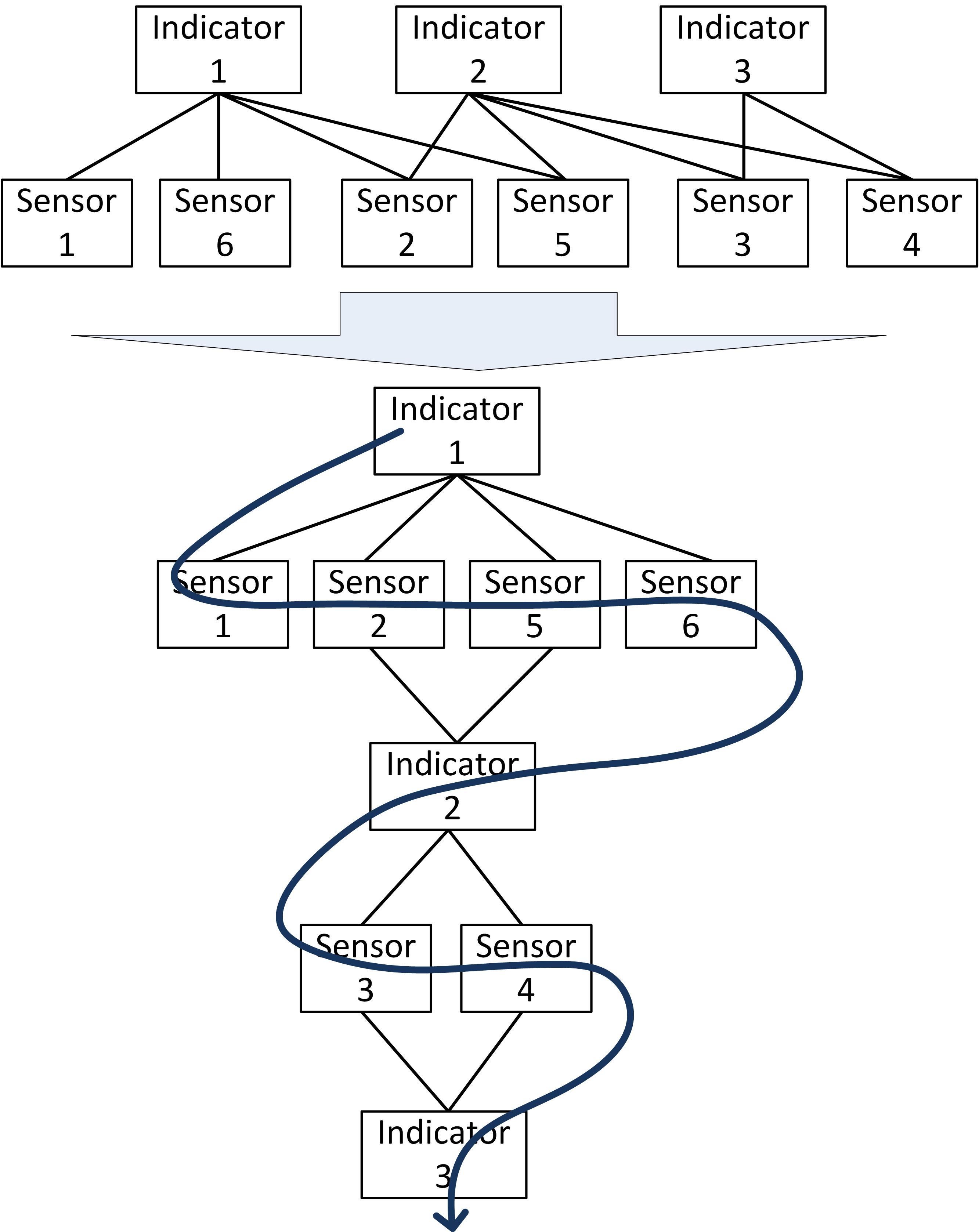}
	\caption{Reordered input graph and corresponding ordering for startIndicator 'A' for example given in Figures \ref{fig:example1}}\label{fig:order}
\end{figure}

For ordering the elements, QP uses a breadth-first strategy (see Figure \ref{fig:order}). Starting from a certain indicator (startIndicator) the next elements to be considered are all connected sensors based on the given input graph. Then, all indicators connected to these sensors are considered, and so forth, until there are no more elements. This way of traversing a graph (i.e.\ the input graph) is known as breadth-first or also as topological order, as the graph is traversed from level to level. Algorithm \ref{alg:order} shows how this is realized. The $\oplus_{l}$ operation stands for inserting an element into a vector of elements at the last position.

\begin{algorithm}\caption{QuickPup: GetBreadthFirstOrder}\label{alg:order}\small
\begin{algorithmic}
\STATE INPUT: startIndicator, indicators, sensors, edges 
\STATE
\STATE elements $\gets$ \{startIndicator\}
\WHILE{sizeOf(elements) $<$ sizeOf(indicators) + sizeOf(sensors)}
\STATE connectedElems $\gets$ getConnectedElems(elements, indicators, sensors, edges)
\FORALL{elem IN connectedElems}
\IF{elem NOT IN elements}
\STATE elements $\gets$ elements $\oplus_{l}$ elem
\ENDIF
\ENDFOR
\ENDWHILE
\RETURN elements
	\end{algorithmic}
\end{algorithm}

Once an element ordering is fixed, QP creates an empty model and calls a recursive sub-procedure ($Assign$, see Algorithm \ref{alg:assign}) creating and connecting the units and trying to assign the elements. Thus, the model (also called the \textit{solution graph} or simply \textit{solution}) consists of the units, their partner unit connections, and the element assignments to the units. 
If $Assign$ runs into a timeout (SystemTime $>$ stopTime), QP continues with the next entry point (i.e.\ startIndicator is reassigned). If all iterations produced timeouts the maxTime is reached and QP stops with no decision. If $Assign$ can prove the unsatisfiability of the given input graph QP returns FALSE. Please note that the combination of multiple start indicators and breadth-first ordering focuses on the early detection of unsatisfiable instances. The idea is that if an instance is unsatisfiable then there is also at least one indicator which is part of the conflict. Iterating through all indicators guarantees that the subsequent backtracking procedure encounters the conflict in the beginning at least once. 

\begin{algorithm}\caption{QuickPup: Assign}\label{alg:assign}\small
\begin{algorithmic}
\STATE INPUT: elements[], edges, ucap, iucap, model, index, stopTime, maxUnits
\STATE
	\IF{index $>$ lastIndexOf(elements)}
\RETURN TRUE
	\ELSIF{SystemTime $>$ stopTime}
\RETURN TIMEOUT
	\ENDIF
\STATE
\STATE	currElem $\gets$ elements[index]
\STATE	
\IF {numUnits(model) $<$ maxUnits}
\STATE	unit $\gets$ createNewUnit(model, ucap, iucap)
	\IF{AssignAndConnect(currElem, unit, model, edges) = TRUE}
\STATE		consistent $\gets$ Assign(elements, model, index + 1, stopTime)
\STATE
		\IF{consistent = TRUE}
\RETURN  TRUE
		\ELSIF{consistent = FALSE}
\STATE			UndoAssignAndConnect(currElem, unit, model, edges)
\STATE			remove(unit,model)
		\ELSIF{consistent = TIMEOUT}
\RETURN TIMEOUT
		\ENDIF
	\ENDIF
	\ENDIF
	\STATE
	\FORALL{oldUnit IN model}
		\IF{AssignAndConnect(currElem, oldUnit, model, edges) = TRUE}
\STATE			consistent$\gets$ Assign(elements, model, index + 1, stopTime)
			\IF{consistent = TRUE}
\RETURN  TRUE
			\ELSIF{consistent = FALSE}
\STATE				UndoAssignAndConnect(currElem, oldUnit, model, edges) 					
\ELSIF{consistent = TIMEOUT}
\RETURN TIMEOUT
			\ENDIF
		\ENDIF
	\ENDFOR
\RETURN FALSE
\end{algorithmic}
\end{algorithm}

If $Assign$ is successful, i.e.\ a consistent assignment for all elements has been found, such that all edges in the input graph are supported, QP minimizes the model and returns it. Minimizing the model in this context means merging units when possible. This step is important for reducing the number of units in the model. Algorithm \ref{alg:minimize} depicts the idea. For pairs of units in the (consistent) model, merging is executed if possible. The $\oplus_{m}$ operator stands for unit merging. If merging is successful, the obsolete unit will be removed from the model.

Actual model checking by backtracking is done by $Assign$. Algorithm \ref{alg:assign} shows the procedure. The input consists of the (ordered) elements, the edges of input graph, the intermediate model, an index pointing to the next element to be assigned, and a time limit (stopTime). First, $Assign$ checks whether the index is greater than the last possible index. In this case all elements have already been assigned successfully and $Assign$ returns TRUE. If this is not the case, $Assign$ checks whether there is still some time left for further calculations, otherwise $Assign$ returns TIMEOUT.

If there is at least one element and some time left $Assign$ proceeds with the assignment of the next element (currElem). To this end QP first creates a new unit of the model and checks whether the assignment to the new unit leads to a consistent intermediate model, i.e.\ all relevant partner unit connections can be established. Please note that a unit is limited in its maximal number of indicators/sensors (UCAP) and its maximal number of partner unit connections (IUCAP).

Consistency checking, the establishment of new partner unit connections and element assignment are carried out in $AssignAndConnect$, see Algorithm \ref{alg:place}. Basically, $AssignAndConnect$ checks two preconditions before an element is assigned to a unit. First, there must be at least one free place left on the unit for picking up a further indicator or sensor, respectively. In the case of a new unit, this precondition is always given. Second, $AssignAndConnect$ verifies that all additional partner unit connections can be established, this being limited by means of IUCAP\footnote{Note, that the partner unit connections are uniquely determined, i.e.\ no search needed.}.

\begin{algorithm}\caption{QuickPup: AssignAndConnect}\label{alg:place}\small
\begin{algorithmic}
\STATE INPUT: currElem, unit, model, edges
\STATE
	\IF{hasFreePlace(unit, currElem) = FALSE}     
\RETURN FALSE
	\ELSIF{relevantUnitsCanBeConnected(currElem, unit, model, edges) = FALSE} 
\RETURN FALSE
	\ELSE
\STATE		add(currElem, unit)
\STATE		establishConnections(currElem, unit, model)
\RETURN TRUE
	\ENDIF
	\end{algorithmic}
\end{algorithm}

If the assignment is successful, $Assign$ calls itself recursively with the updated intermediate model and incremented index pointing to the next element. In case the subsequent $Assign$ returns TRUE, all remaining elements have been assigned consistently, and the current instance of $Assign$ also returns TRUE. If a timeout has been triggered, and hence the return value of the called $Assign$ instance is TIMEOUT, the current $Assign$ back-propagates TIMEOUT.

If the called $Assign$ instance returns FALSE, this means that no assignment for the remaining elements could be found which is consistent with assignment of the current element (currElem) to the newly created unit. In this case, all changes which have been done by $AssignAndConnect$ are revoked and the new unit is removed from the model.

In a second step, QP tries to assign currElem to one of the old units already existing in the model. The procedure for any old unit is similar to the case where new units are exploited, except that it is well possible that the unit could be 'full', i.e.\ there is no free place for the current element on that unit. If no consistent assignment could be found for both, the old units and a newly generated unit, $Assign$ returns FALSE (i.e.\ backtracks).

\begin{algorithm}\caption{QuickPup: Minimize}\label{alg:minimize}\small
\begin{algorithmic}
\STATE INPUT: model
\STATE
\FORALL{unitA IN model AND unitB IN model AND unitA $\neq$ unitB}
			 \IF{NOT tooManyIndicators(unitA $\oplus_{m}$ unitB)}
		\IF{NOT tooManySensors(unitA $\oplus_{m}$ unitB)}
		 \IF{NOT tooManyPartners(unitA $\oplus_{m}$ unitB)}
		\STATE unitA $\gets$ unitA $\oplus_{m}$ unitB
		\STATE remove(unitB, model)
		\ENDIF
		\ENDIF			
	\ENDIF
\ENDFOR
	\end{algorithmic}
\end{algorithm}

It is obvious, that preferring the creation of new units typically results in non-minimal models, regarding the number of units. For optimizing the model QP uses the greedy procedure depicted in Algorithm \ref{alg:minimize}. If the problem is only to decide whether for a given input graph a configuration exists, the optimization step can be skipped.

\section{Experimental Results}\label{results}

In order to test the performance of QP we refer to the results and the benchmark instances presented in \cite{aschinger:opt}. Thereby, the first priority was to come up with a consistent solution respecting UCAP and IUCAP and only as a second priority to come up with a solution minimizing the number of used units. All experiments in \cite{aschinger:opt} were carried out on a 3 GHz dual-core system with 4 GByte of RAM, running Fedora Linux. The results for QP (with model minimizing) were produced by an Intel Core i7 quad-core notebook with 2.8 GHz and 8 GByte of RAM, running Windows 7. QP was implemented in Java 1.6. Thereby, a multi-threaded version of the algorithms was produced. The main idea was to concurrently start one thread per start node (indicator), thus making time slices obsolete. As soon as one of the threads encounters satisfiability or unsatisfiability the procedure stops. Moreover, the maximal number of units was not set such that optimization was purely done by the $Minimize$ procedure. 

In \cite{aschinger:opt} five different implementations were tested\footnote{Detailed information about the implementations can be found in \cite{aschinger:opt}}:

\begin{itemize}
\item DecidePup \\(DP, polynomial time algorithm only for IUCAP = 2 \cite{aschinger:tackling})
\item Constraint Programming \\(CP, implementation with Eclipse-Prolog (www.eclipseclp.org))
\item SAT Solving \\(SAT, MiniSat (www.minisat.se))
\item Integer Programming \\(IP, two different systems were tested: CBC from the COIN-OR project (www.coin-or.org) and IBM's Cplex (www.ibm.com))
\item Answer Set Programming \\(ASP, Clingo from the Potsdam Answer Set Solving Collection \\(potassco.sourceforge.net))
\end{itemize}

The benchmark\footnote{Benchmark instances can be downloaded at http://demo2-iwas.uni-klu.ac.at/pupsolver/} consists of two parts. In part one the corresponding instances are to be solved with a unit capacity of 2 (UCAP=2) and an inter unit capacity of 2 (IUCAP=2). The instances of part two are to be solved with the same UCAP but with an IUCAP = 4. There are four different types of instances: double (dbl) double-variant (dblv), triple (tri), and grid\footnote{Instances \textit{grid-90, ..., grid-99} were removed from the benchmark as the corresponding input graphs were not connected such that those instances can be seen as a collection of trivial non-relevant instances.}. The instances differ in their number of zones and sensors and the number of sensors per zone. Furthermore, the instances have different structural characteristics, as they are patterned on real problem instances. Figure \ref{fig:instances} shows an example of a real world input graph and the grid8-instance of the benchmark\footnote{More details about the instance structure can be found in \cite{aschinger:opt}.}.

\begin{figure}
\center
                \includegraphics[width=14cm]{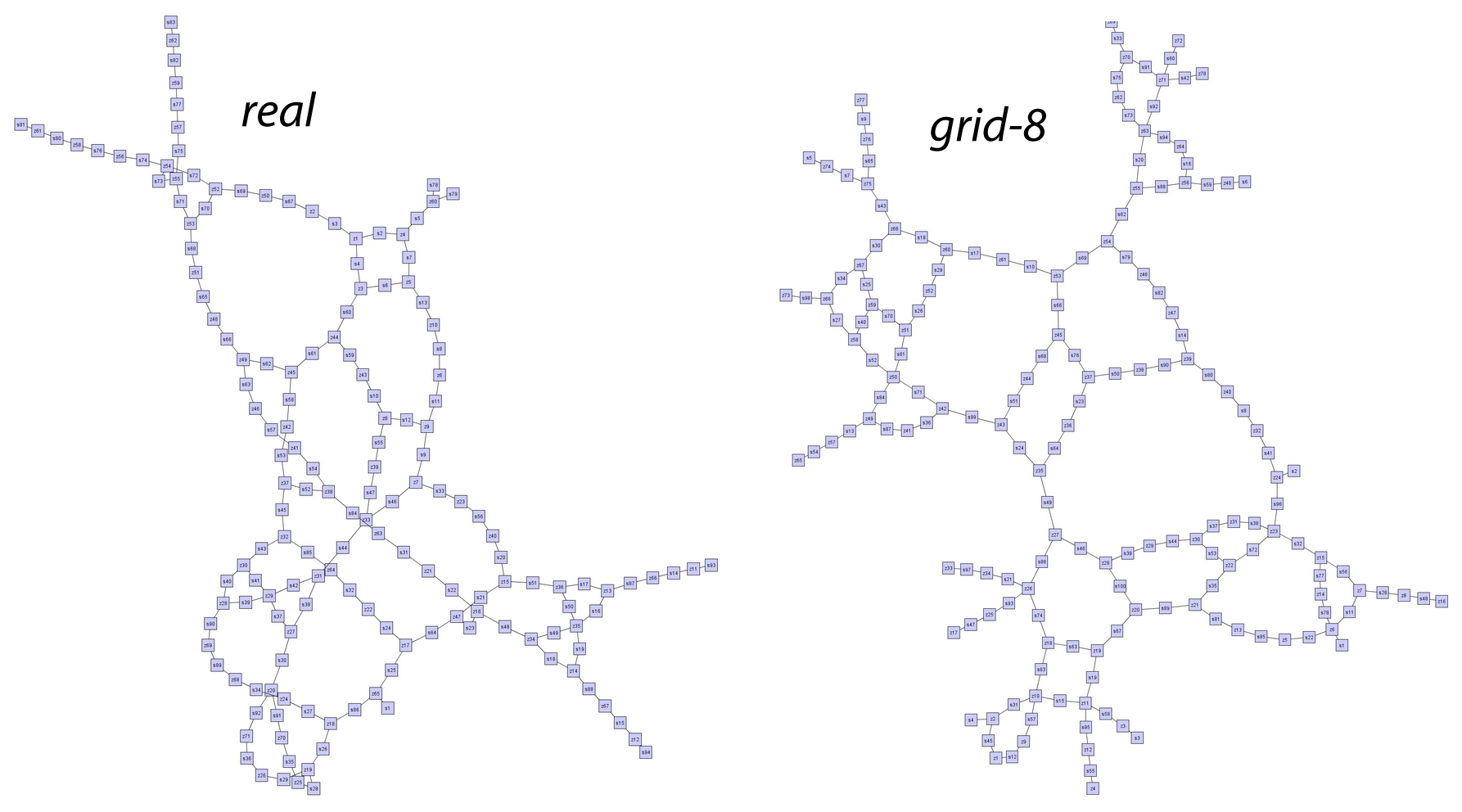}
                \caption{Real problem instance of Siemens Austria in the railway station domain and benchmark instance \textit{grid8} \cite{teppan:quickpup}}\label{fig:instances}
\end{figure}

Table \ref{tab:results2} and Table \ref{tab:results4} summarize the experimental runtimes (seconds) on the described benchmark instances\footnote{For IP the better result produced by the two different approaches is listed.}. The \textit{units} column lists the minimal number of units required for a consistent solution. A minimal number of '0' means that no solution exists (instances tri-34 and tri-64 for IUCAP = 2). In these cases, the results refer to the time needed in order to prove unsatisfiability. A '/' means that the corresponding approach could not solve an instance within a certain time frame. For the experiments in \cite{aschinger:opt} the time frame was limited to 600 seconds. Except for QP, all approaches produced only minimal solutions by construction. The number of additional units needed by QP is listed in the \textit{+units} column.

\begin{table}[htb]
	\centering
		\begin{tabular}{|@{}c@{}|@{}c@{}|@{}c@{}|@{}c@{}|@{}c@{}|@{}c@{}|@{}c@{}|@{}c@{}|@{}c@{}|}
		\hline
			\textsc{Instance}	& \textsc{min \#Units}	& \textsc{DP} & \textsc{SAT} & \textsc{CP}& \textsc{ASP}	& \textsc{IP} & \textsc{QP} &\textsc{+Units}\\
			\hline
dbl-20&14&0.01&0.48&0.02&0.16&1.53&$<$0.01&0\\
dbl-40&29&0.05&2.36&0.28&3.93&13.58&$<$0.01&0\\
dbl-60&44&0.08&29.74&0.42&/&213.58&$<$0.01&0\\
dbl-80&59&0.16&/&1.14&/&522.5&$<$0.01&0\\
dbl-100&74&0.41&/&1.89&/&/&0.01&0\\
dbl-120&89&0.39&/&3.21&/&/&$<$0.01&0\\
dbl-140&104&0.59&/&5.01&/&/&$<$0.01&0\\
dbl-160&119&0.71&/&13.94&/&/&$<$0.01&0\\
dbl-180&134&0.87&/&20.07&/&/&$<$0.01&0\\
dbl-200&149&1.08&/&14.40&/&/&$<$0.01&0\\
\hline
dblv-30&15&65.49&0.42&0.09&0.26&2.93&$<$0.01&0\\
dblv-60&30&/&3.15&0.26&1.94&/&$<$0.01&0\\
dblv-90&45&/&12.54&0.82&27.35&/&$<$0.01&0\\
dblv-120&60&/&41.65&1.85&13.92&/&$<$0.01&0\\
dblv-150&75&/&20.97&3.48&29.54&/&$<$0.01&0\\
dblv-180&90&/&44.28&6.20&54.50&/&$<$0.01&0\\
\hline
tri-30&20&0.50&0.79&1.07&0.41&45.17&$<$0.01&0\\
tri-32&20&/&0.74&0.64&0.26&4.66&$<$0.01&0\\
tri-34&0&/&22.77&21.10&0.89&5.06&$<$0.01&0\\
tri-60&40&114.08&315.42&158.49&4.40&108.01&$<$0.01&0\\
tri-64&0&/&379.36&/&43.88&76.26&$<$0.01&0\\
					\hline
		\end{tabular}
		\caption{Results IUCAP=2, time is given in secs}\label{tab:results2}
\end{table}

Only QP was able to solve all instances. Even DP, which is a polynomial time algorithm capable of only solving problem instances with IUCAP = 2, was not able to solve all instances (with IUCAP = 2). In the cases where the other approaches were able to calculate a solution, QP was always much faster. In fact, the time needed for the calculation of all solutions was significantly below one second. The overhead of additional units used by QP is very small in most cases. As a matter of fact, for almost all instances QP produced minimal solutions, i.e.\ \textit{+units = 0}. Compared to the optimal solution, only for $tri-90$ an additional unit was needed. This makes up a practically negligible increase of 5\%. In \cite{drescher:pup} some concepts of QP were transferred to CP technology and also partially extended. Although, this significantly boosted solution calculation, original QP remained the best approach. 

\begin{table}[htb]
	\centering
		\begin{tabular}{|@{}c@{}|@{}c@{}|@{}c@{}|@{}c@{}|@{}c@{}|@{}c@{}|@{}c@{}|@{}c@{}|@{}c@{}|}
		\hline
			\textsc{Instance}	& \textsc{min \#Units} & \textsc{SAT} & \textsc{CP}& \textsc{ASP}	& \textsc{IP} & \textsc{QP} &\textsc{+Units}\\
			\hline
tri-30&20&2.40&0.12&0.40&24.79&$<$0.01&0\\
tri-32&20&1.91&0.14&0.66&20.84&$<$0.01&0\\
tri-34&20&1.98&/&0.60&/&$<$0.01&1\\
tri-60&40&/&0.52&11.07&/&$<$0.01&0\\
tri-64&40&/&/&7.61&/&$<$0.01&0\\
tri-90&59&401.44&1.50&332.34&/&$<$0.01&0\\
tri-120&79&/&3.37&/&/&$<$0.01&0\\
\hline
grid1&50&78.19&/&31.45&/&$<$0.01&0\\
grid2&50&90.89&/&18.91&/&$<$0.01&0\\
grid3&50&88.87&/&25.72&/&$<$0.01&0\\
grid4&50&95.12&/&24.66&/&$<$0.01&0\\
grid5&50&454.42&/&48.88&/&$<$0.01&0\\
grid6&50&204.85&/&9.15&/&$<$0.01&0\\
grid7&50&112.36&/&12.89&/&$<$0.01&0\\
grid8&50&/&/&11.89&/&$<$0.01&0\\
grid9&50&91.62&/&19.71&/&$<$0.01&0\\
grid10&50&545.16&/&13.54&/&$<$0.01&0\\
					\hline
		\end{tabular}
		\caption{Results IUCAP=4, time is given in secs}\label{tab:results4}
\end{table}

\newpage
\section{Conclusions}

The partner units problem (PUP) is an important problem in the domain of knowledge-based configuration and furthermore acknowledged as hard benchmark problem for Logic Programming, Answer Set Programming and Constraint Programming. There are various application domains for the PUP such as railway safety, CCTV surveillance or electrical engineering.

Although there has been remarkable effort in investigating the problem, the complexity remained widely unclear. This article closes the gap by summarizing already existing complexity results and providing NP-completeness results for all problem subclasses of which the exact complexity class was unknown so far.

Furthermore, we present the QuickPup algorithm which is a heuristic backtracking search algorithm for the PUP. The comparison of new runtime results with results presented in (\cite{aschinger:opt}) on benchmark instances patterned on real life problem instances shows the clear superiority of QuickPup. Since QuickPup is currently the best known algorithm for the PUP, this problem solving strategy was integrated in the Siemens' Configuration Problem Solving Engine (\cite{supreme}) and has already been successfully applied for real world configuration and reconfiguration problems (\cite{teppan:reconf}).

\bibliographystyle{ACM-Reference-Format-Journals}
\bibliography{acmsmall-sample-bibfile}

\end{document}